\theoremstyle{definition}
\newtheorem{Proposition}{Proposition}
\newtheorem{Definition}{Definition}
\newtheorem{Theorem}{Theorem}
\newtheorem{example}{Example}
\DeclareMathOperator*{\argmin}{arg\,min}
\newcommand{\R}{\mathbb{R}}
\newcommand{\Var}{{\rm Var}}
\def\EE{\mathbb{E}}
\definecolor{purple}{rgb}{0.84, 0.17, 0.89}
\definecolor{red2}{rgb}{0.7, 0, 0.1}
\definecolor{blue2}{rgb}{0.1,0.1,0.65}
\title{Robust Sampling for Active Statistical Inference}
\author{%
 Puheng Li\textsuperscript{$\dagger$}
  \qquad
  Tijana Zrnic\textsuperscript{$\dagger$, $\diamond$}
  \qquad
  Emmanuel J. Cand\`es\textsuperscript{$\dagger$, $\triangle$}
  \\[2ex] 
  \textsuperscript{$\dagger$}Department of Statistics \\
  \textsuperscript{$\Diamond$}Stanford Data Science \\
\textsuperscript{$\triangle$}Department of Mathematics
  \\[2ex] 
  Stanford University
}
\date{}
\begin{document}

\maketitle
\begin{abstract}
Active statistical inference \citep{zrnic2024active} is a new method for inference with AI-assisted data collection. Given a budget on the number of labeled data points that can be collected and assuming access to an AI predictive model, the basic idea is to improve estimation accuracy by prioritizing the collection of labels where the model is most uncertain. The drawback, however, is that inaccurate uncertainty estimates can make active sampling produce highly noisy results, potentially worse than those from naive uniform sampling.
In this work, we present robust sampling strategies for active statistical inference. Robust sampling ensures that the resulting estimator is never worse than the estimator using uniform sampling. Furthermore, with reliable uncertainty estimates, the estimator usually outperforms standard active inference. This is achieved by optimally interpolating between uniform and active sampling, depending on the quality of the uncertainty scores, and by using ideas from robust optimization. We demonstrate the utility of the method on a series of real datasets from computational social science and survey research.
\end{abstract}

\section{Introduction}

Collecting high-quality labeled data remains a challenge in data-driven research, especially when each label is costly and time-consuming to obtain. In response, many fields have embraced machine learning as a practical solution for predicting unobserved labels, such as annotating satellite imagery in remote sensing \citep{xie2016transfer} and predicting protein structures in proteomics \citep{jumper2021highly}. Prediction-powered inference \citep{angelopoulos2023prediction} is a methodological framework showing how to perform valid statistical inference despite the inherent biases in such predicted labels.

Active statistical inference \citep{zrnic2024active} was recently introduced to further enhance inference by actively selecting which data points to label. The basic idea is to compute the model's uncertainty scores for all data points and prioritize collecting those labels for which the predictive model is most uncertain. When the uncertainty scores appropriately reflect the model's errors, \citet{zrnic2024active} show that active inference can significantly outperform prediction-powered inference (which can essentially be thought of as active inference with naive uniform sampling), meaning it results in more accurate estimates and narrower confidence intervals. However, when uncertainty scores are of poor quality, active inference can result in overly noisy estimates and large confidence intervals. This is an important limitation, seeing that there is widespread recognition that measuring model uncertainty is challenging. Large language models, for example, are often overconfident in their answers \cite{chhikara2025mind, zhang2024calibrating,zhou2023navigating}. Miscalibrated uncertainty scores also arise when there is a distribution shift between the training data and the test domain.

To illustrate the issue empirically, consider the problem of estimating the approval rate of a presidential candidate: $\theta^* = \EE[Y]$, where $Y\in\{0,1\}$ is the binary indicator of approval, using Pew post-election survey data \cite{atp79}. Here, we have demographic covariates $X_1,\dots,X_n$ corresponding to $n$ people, but we do not observe the approval indicator $Y_i$ for everyone. Rather, we have a budget $n_b < n$ on how many people we can survey and collect their $Y_i$. In addition, we have a machine learning model $f$ that we can use to obtain a cheap prediction $f(X_i)$ of $Y_i$ from the available covariates. Active inference suggests surveying those individuals where $f$ is uncertain. For example, if $f(X_i)$ is obtained by thresholding a continuous score $p(X_i) \in [0,1]$ representing the probability the model assigns to the missing label taking on the value $1$, this could mean prioritizing the collection of labels where $p(X_i)$ is close to $0.5$. 
In Figure \ref{fig:teaser}, we show the effective sample size and coverage of prediction-powered inference (uniform sampling), standard active inference, and our robust active inference method, for varying values of the budget $n_b$. The effective sample size is formally defined in Section \ref{sec:exp}; it is the number of samples the method that samples uniformly at random would need to use to achieve the accuracy of the labeling method under study.  
To demonstrate a challenge for active inference, we train $f$ on a small dataset, resulting in poorly estimated uncertainties. We see that active sampling results in a smaller effective sample size and a much larger standard deviation than simple uniform sampling. This is because the variance of the active sampling strategy is large, which is due to some extreme values of sampling probability. Meanwhile, the robust method outperforms both baselines. This is achieved by estimating the quality of the uncertainty scores and optimally interpolating between uniform and active sampling. All three methods come with provable validity guarantees, as confirmed by the achieved target coverage of $90\%$.

\begin{figure}[t]
    \centering
    \includegraphics[width=0.9\linewidth]{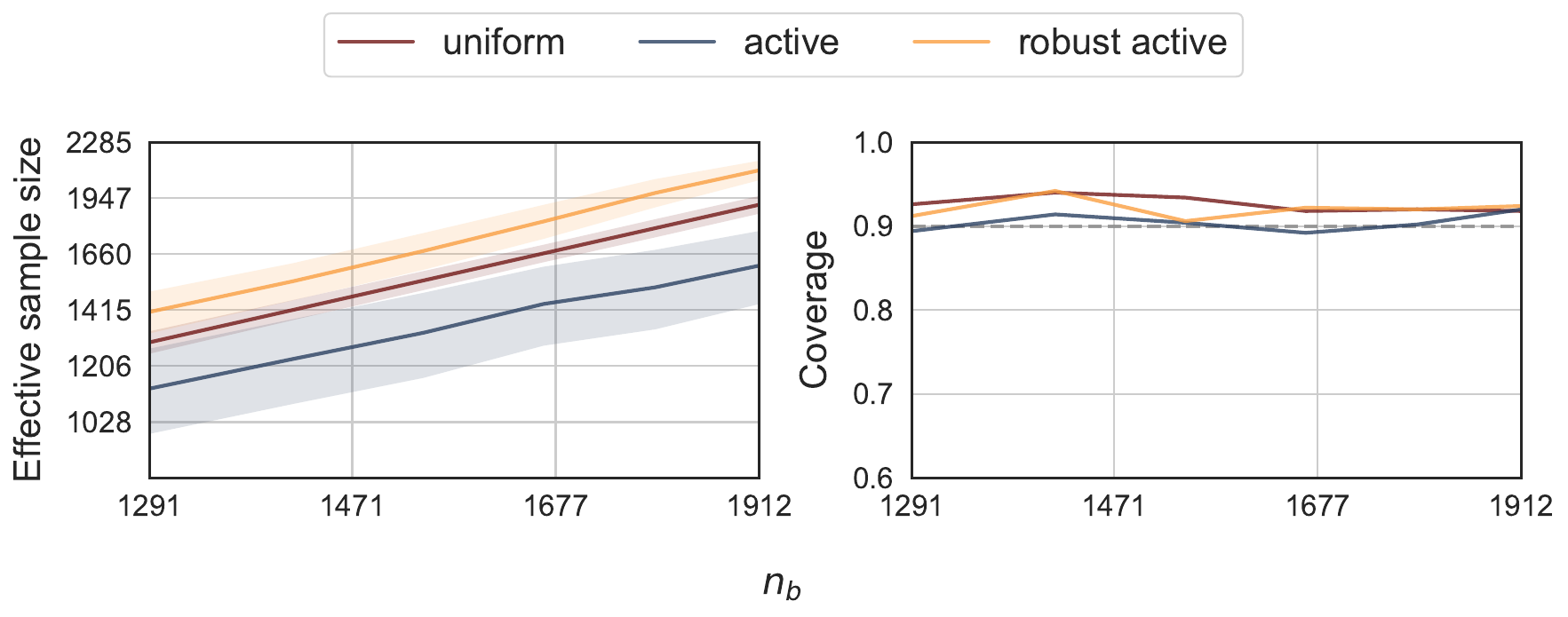}
        \caption{\textbf{Effective sample size and coverage on Pew post-election survey data.} We compare uniform, active, and robust active sampling, for different values of the sampling budget $n_b$. The target of inference is the approval rate of a presidential candidate. We show the mean and one standard deviation of the effective sample size estimated over $500$ trials; in each trial we independently sample the observed labels.}
    \label{fig:teaser}
\end{figure}

The source code for all experiments is available at:\\ { \url{https://github.com/lphLeo/Robust-Active-Statistical-Inference}}.

\subsection{Related work}
Our paper builds on active statistical inference \citep{zrnic2024active}, which itself builds on prediction-powered inference \citep{angelopoulos2023prediction} and, more generally, statistical inference assisted by predictive models \cite{mccaw2023leveraging, wang2020methods, song2024general}. There is a growing literature in this space, aimed at ensuring robustness against poor predictions~\cite{angelopoulos2023ppi++, miao2023assumption, gan2024prediction, gronsbell2024another, miao2024task, ji2025predictions}, sample efficiency when there is no good pre-trained model $f$~\cite{zrnic2024cross}, simplicity and applicability to more general estimation problems \cite{zrnic2024note, kluger2025prediction}, and handling missing covariates \cite{miao2023assumption,kluger2025prediction}. Notably, several works study adaptive label collection strategies \cite{ao2024prediction, fisch2024stratified, gligoric2024can}.

Zooming out further, at a technical level this line of work relates to semiparametric inference, missing data, and causality \cite{tsiatis2006semiparametric, robins1994estimation, robins1995semiparametric, rubin2018multiple}. In particular, the prediction-powered and active inference estimators closely resemble the augmented inverse probability weighting (AIPW) estimator~\cite{robins1994estimation}.

Our work also connects with many areas in machine learning and statistics that study adaptive data collection; most notably, active learning \cite{settles2009active, roy2001toward} and adaptive experimental design \cite{hahn2011adaptive, fedorov2013theory}. We collect data based on model uncertainty, akin to active learning; however, our objective is statistical inference on typically low-dimensional parameters, rather than prediction. Active testing \cite{kossen2021active} also involves adaptive data collection, but it pursues a different objective of high‑precision risk estimation for a fixed model and uses a distinct estimator. Our approach can be seen as an adaptive design assisted by a powerful predictive model, with a robustness wrapper for improved performance.

More distantly, our work also relates to robust statistics and robust machine learning \cite{huber1992robust, zoubir2012robust, staudte2011robust, ramoni2001robust, steinhardt2018robust, cacciarelli2024robust, guo2022robust}. In particular, our method provides a safeguard against poor uncertainty estimation by solving a robust optimization problem \citep{ben2009robust, gabrel2014recent, beyer2007robust}.

\subsection{Problem setup}

We follow the problem setting from \citep{zrnic2024active}. We observe unlabeled instances $X_1, \ldots, X_n$ drawn i.i.d. from a distribution ${P}_X$, but we do not observe their labels $Y_i$. We use ${P}={P}_X \times {P}_{Y \mid X}$ to denote the joint distribution of $(X_i, Y_i)$. Our goal is to perform inference for a parameter $\theta^*$ that depends on the distribution of the unobserved labels; that is, the parameter is a functional of ${P}$. In particular, we assume that $\theta^*$ can be written as:
\[ \theta^*=\underset{\theta}{\arg \min } ~\mathbb{E}\left[\ell_\theta(X, Y)\right], \text { where }(X, Y) \sim {P}.\]
Here, $\ell_{\theta}$ is a convex loss function. This is a broad class of estimands, known as \emph{M-estimation}, and it includes means, medians, linear and logistic regression coefficients, and more.
We have a budget $n_b$ on the number of labels we can collect in expectation, and typically $n_b \ll n$. To assist in imputing the missing labels, we also have a black-box predictive model $f$ at our disposal.

\section{Warm-up: robust sampling for mean estimation}\label{sec:warmup}

Consider the case where $\theta^*$ is the label mean, $\theta^* = \EE[Y]$. The active inference estimator for $\theta^*$ is given by:
\begin{equation}
\label{eq:active_mean_est}
\hat{\theta}^{\pi}=\frac{1}{n} \sum_{i=1}^n\left(f\left(X_i\right)+\left(Y_i-f\left(X_i\right)\right) \frac{\xi_i}{\pi\left(X_i\right)}\right).
\end{equation}
Here, $\pi(\cdot)$ is any sampling rule that satisfies $\EE [\pi(X)] \leq \frac{n_b}{n}$ so that the budget constraint is met on average, and
$\xi_i \sim \text{Bern}(\pi(X_i))$ is the indicator of whether the label $Y_i$ is sampled. Since the number of labeled data points is a sum of independent Bernoullis, a standard Hoeffding argument guarantees that the realized labeling rate will closely match the budget with high probability. Specifically, the labeling ratio will not exceed $\frac{n_b}{n} + \epsilon$ with probability $1- \delta$, provided that $n > \frac{\log(1/\delta)}{2\epsilon^2}$ for any $\epsilon, \delta > 0$. As shown in \citep{zrnic2024active}, the variance of this estimator is 
\begin{equation}
\label{eq:var_meanest}
\Var \left(\hat{\theta}^{\pi}\right)=\frac{1}{n}\left(\Var(Y)+\mathbb{E}\left[(Y-f(X))^2\left(\frac{1}{\pi(X)}-1\right)\right]\right),
\end{equation}
and the optimal sampling rule is $\pi_{\mathrm{opt}}(X_i) \propto \sqrt{\EE[(Y_i - f(X_i))^2|X_i]}$. In other words, it is optimal to upsample where the model $f$ makes the largest errors.

The most straightforward sampling rule that satisfies the budget constraint is the uniform rule: $\pi^{\mathrm{unif}}(X) = {n_b}/{n}$. However, if we have access to a good measure of model uncertainty that can serve as a proxy for the model error $\sqrt{\EE[(Y_i - f(X_i))^2|X_i]}$, then we can obtain a rule that is closer to $\pi_{\mathrm{opt}}$. For example, we might prompt a large language model for its uncertainty about $X_i$ or look at the softmax output of a neural network, and upsample where the uncertainty is high. The issue is that if we severely underestimate the model error, then the estimator's variance can blow up: clearly, if $\pi(X_i)$ is small when the actual error $(Y_i-f(X_i))^2$ is large, the variance will be large as well. This is the reason why we saw poor performance in Figure \ref{fig:teaser}. 

Given any initial sampling rule $\pi$, our approach is to find an improved, \emph{robust} sampling rule $\pi^{\mathrm{robust}}$ that is never worse than either $\pi$ or $\pi^{\mathrm{unif}}$. By that we mean that the resulting active inference estimator will have a variance that is no worse that with either $\pi$ or $\pi^{\mathrm{unif}}$ used for label collection: $\Var(\hat\theta^{\pi^{\mathrm{robust}}}) \leq \min\{\Var(\hat\theta^{\pi}), \Var(\hat\theta^{\pi^{\mathrm{unif}}})\}$.

\subsection{Budget-preserving path}

Since our goal is to find a sampling rule $\pi^{\mathrm{robust}}$ that performs no worse than $\pi^{\mathrm{unif}}$ and an arbitrary given $\pi$, it is natural to consider a path that connects $\pi$ and $\pi^{\mathrm{unif}}$, while preserving the sampling budget along the path.

\begin{Definition}[Budget-preserving path] \label{def: path}
    We call a continuous path $\pi^{(\rho)}$, $\rho\in[0,1]$, a \emph{budget-preserving path} connecting $\pi$ and $\pi^{\mathrm{unif}}$ if $\pi^{(0)} = \pi$, $\pi^{(1)} = \pi^{\mathrm{unif}}$, and $\EE[\pi^{(\rho)}(X)] = \EE[\pi(X)]$ for all $\rho \in [0,1]$.
\end{Definition}
Correspondingly, given a point $\rho$ along the path, we compute the estimator $\hat \theta^{\pi^{(\rho)}}$, obtained as the active inference estimator \eqref{eq:active_mean_est} with sampling rule $\pi^{(\rho)}$. The following are some examples of valid budget-preserving paths.

\begin{example}[Linear path]
\label{ex:linear_path}
$\pi^{(\rho)} = (1-\rho)\pi + \rho \pi^{\mathrm{unif}}$. 
\end{example}

\begin{example}[Geometric path]
\label{ex:geometric_path}
$\pi^{(\rho)} \propto \pi^{1-\rho}(\pi^{\mathrm{unif}})^{\rho}$. The ``$\propto$'' hides the normalization factor that ensures $\EE[\pi^{(\rho)}(X)] = \EE[\pi(X)]$ for all $\rho$.
\end{example}

A natural family of budget-preserving paths can be recovered via the ``least-action'' principle, yielding the definition of geodesic paths. See Appendix \ref{appendix:geodesic} for a general definition of geodesic paths, details of how Examples \ref{ex:linear_path} and \ref{ex:geometric_path} can be recovered as special cases, as well as further examples.

Of course, if we consistently estimate the optimal point $\rho^*\in[0,1]$ along the path, we are guaranteed to find an estimator that outperforms naive active inference and uniform sampling. 
Moreover, the resulting estimator is still asymptotically normal, which permits the construction of valid confidence intervals. We formalize this key result below in which $\sigma_\rho^2 = n \Var(\hat\theta^{\pi^{(\rho)}})$.

\begin{Theorem} \label{thm:path}
    Suppose $\pi^{(\rho)}$ is a budget-preserving path connecting $\pi$ and $\pi^{\mathrm{unif}}$. Let $\rho^* = \mathop{\arg \min}\limits_{\rho} \Var (\hat{\theta}^{\pi^{(\rho)}}) $, and suppose $\hat\rho = \rho^* + o_P(1)$. Then,
\[
\sqrt{n}\left(\hat{\theta}^{\pi^{(\hat\rho)}} -\theta^*\right) \xrightarrow{d} \mathcal{N}\left(0, \sigma_{\rho^*}^2\right),
\]
where $\sigma_{\rho^*}^2 \leq  \min\{\sigma_0^2, \sigma_1^2\}$.
\end{Theorem}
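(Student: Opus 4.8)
The plan is to combine a pointwise central limit theorem at the oracle point $\rho^*$ with a stochastic-equicontinuity argument that lets us replace $\rho^*$ by the data-driven $\hat\rho$ without changing the limit. The variance inequality $\sigma_{\rho^*}^2 \le \min\{\sigma_0^2,\sigma_1^2\}$ is immediate: by Definition \ref{def: path} every $\pi^{(\rho)}$ is budget-preserving, so $\rho=0$ and $\rho=1$ are feasible competitors in the minimization defining $\rho^*$, whence $\sigma_{\rho^*}^2 = n\Var(\hat\theta^{\pi^{(\rho^*)}})$ is no larger than either $\sigma_0^2$ or $\sigma_1^2$. The substance of the theorem is the distributional statement.

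First I would fix $\rho^*$ and record that $\hat\theta^{\pi^{(\rho^*)}}$ is an average of $n$ i.i.d.\ terms with mean $\theta^*$ (unbiasedness follows from $\EE[\xi_i \mid X_i,Y_i]=\pi^{(\rho^*)}(X_i)$, exactly as in \eqref{eq:active_mean_est}) and variance $\sigma_{\rho^*}^2/n$ given by \eqref{eq:var_meanest}. Under a mild moment condition guaranteeing $\sigma_{\rho^*}^2<\infty$ (e.g.\ $\pi^{(\rho)}$ bounded away from $0$ uniformly in $\rho$, together with $\EE[(Y-f(X))^2]<\infty$), the Lindeberg CLT yields $\sqrt n(\hat\theta^{\pi^{(\rho^*)}}-\theta^*)\xrightarrow{d}\mathcal N(0,\sigma_{\rho^*}^2)$.

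The key step is to couple the sampling across the whole path so that the increment in the estimator is centered and vanishing. I would draw $U_i\sim\mathrm{Unif}[0,1]$ i.i.d.\ and set $\xi_i(\rho)=\mathbbm{1}\{U_i\le \pi^{(\rho)}(X_i)\}$, so that the same randomness drives the sampling for every $\rho$; write $G_n(\rho)=\sqrt n(\hat\theta^{\pi^{(\rho)}}-\theta^*)$ for the resulting process. Because $\EE[\xi_i(\rho)\mid X_i,Y_i]=\pi^{(\rho)}(X_i)$ for every $\rho$, the increment $G_n(\rho)-G_n(\rho^*)=\frac{1}{\sqrt n}\sum_i(Y_i-f(X_i))\bigl(\tfrac{\xi_i(\rho)}{\pi^{(\rho)}(X_i)}-\tfrac{\xi_i(\rho^*)}{\pi^{(\rho^*)}(X_i)}\bigr)$ is a sum of i.i.d.\ terms that are \emph{mean zero} conditional on $(X_i,Y_i)$ --- this is the crucial point, as it kills the $O_P(\sqrt n)$ bias that a naive Lipschitz bound would otherwise produce. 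A direct computation using the common-$U_i$ coupling shows the per-term second moment is $O(|\rho-\rho^*|)$, so the increment's variance tends to $0$ as $\rho\to\rho^*$.

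The main obstacle is upgrading this pointwise smallness to the \emph{uniform} statement needed for a random, data-dependent $\hat\rho$, namely stochastic equicontinuity at $\rho^*$: for every $\epsilon>0$, $\lim_{\delta\to0}\limsup_n\PP\bigl(\sup_{|\rho-\rho^*|\le\delta}|G_n(\rho)-G_n(\rho^*)|>\epsilon\bigr)=0$. Since $\rho$ is one-dimensional and $\rho\mapsto\mathbbm{1}\{U_i\le\pi^{(\rho)}(X_i)\}$ traces a smoothly (indeed, for the paths of Examples \ref{ex:linear_path}--\ref{ex:geometric_path}, monotonically) indexed family, the relevant function class has polynomial bracketing entropy, and a standard maximal inequality combined with the $O(|\rho-\rho^*|)$ envelope bound delivers the claim; taking the supremum over a whole neighborhood is precisely what makes the argument robust to the dependence between $\hat\rho$ and the sampling indicators. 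Finally, consistency $\hat\rho=\rho^*+o_P(1)$ places $\hat\rho$ in a shrinking neighborhood of $\rho^*$ with probability tending to one, so equicontinuity gives $G_n(\hat\rho)-G_n(\rho^*)=o_P(1)$, and Slutsky's theorem combined with the pointwise CLT yields $\sqrt n(\hat\theta^{\pi^{(\hat\rho)}}-\theta^*)\xrightarrow{d}\mathcal N(0,\sigma_{\rho^*}^2)$.
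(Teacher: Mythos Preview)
Your proposal is correct and considerably more careful than the paper's own argument. Both proofs handle the variance inequality $\sigma_{\rho^*}^2\le\min\{\sigma_0^2,\sigma_1^2\}$ the same way, directly from the definition of $\rho^*$. For the distributional claim, however, the paper simply observes that continuity of the path gives $\pi^{(\hat\rho)}(X_i)=\pi^{(\rho^*)}(X_i)+o_P(1)$, asserts ``by continuity of $\hat\theta^{\pi^{(\rho)}}$'' that $\hat\theta^{\pi^{(\hat\rho)}}=\hat\theta^{\pi^{(\rho^*)}}+o_P(1)$, and then combines the pointwise CLT at $\rho^*$ (cited from Proposition~1 of \cite{zrnic2024active}) with this closeness. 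This is terse precisely where you are careful: $o_P(1)$ closeness of the two estimators does not by itself transfer a $\sqrt n$-scale CLT --- one needs $\sqrt n\bigl(\hat\theta^{\pi^{(\hat\rho)}}-\hat\theta^{\pi^{(\rho^*)}}\bigr)=o_P(1)$ --- and the paper never spells out a coupling of the $\xi_i$'s across $\rho$ that would make $\rho\mapsto\hat\theta^{\pi^{(\rho)}}$ even well-defined as a process. Your common-uniform construction supplies that coupling, your observation that the increment is conditionally mean zero is what keeps the difference at the right scale (rather than $O_P(\sqrt n\,|\hat\rho-\rho^*|)$), and the stochastic-equicontinuity step is what rigorously accommodates the data-dependence of $\hat\rho$. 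The paper's route buys brevity; yours fills in the step that actually carries the analytical weight.
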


Theorem \ref{thm:path} shows that consistently estimating $\rho^*$ will result in an estimator that is no worse than either endpoint. If $\rho^*$ is additionally unique and within $(0,1)$, then the resulting sampling will strictly outperform both active sampling with $\pi$ and uniform sampling. The theoretical results in this paper are asymptotic; however, validity in the finite-sample regime is shown empirically in the experiments in Section \ref{sec:exp}.

It remains to explain how to estimate $\hat{\rho}$. Recall from \eqref{eq:var_meanest} that $\Var(\hat\theta^{\pi^{(\rho)}}) = \frac{1}{n} \EE[\frac{e^2(X)}{\pi^{(\rho)}(X)}] + C$, where $e^2(X) = \EE[(Y-f(X))^2|X]$ and $C$ is a quantity that has no dependence on $\pi^{(\rho)}$. Therefore, to fit $\hat\rho$, we fit an error function $\hat{e}^2(\cdot)\approx e^2(\cdot)$ and solve for the $\rho$ that minimizes the empirical approximation of $\Var(\hat\theta^{\pi^{(\rho)}})$:
\begin{equation}\label{optimization_objective}
    \hat\rho = \argmin_\rho \frac{1}{n}\sum_{i=1}^{n}\frac{\hat{e}^2(X_i)}{\pi^{(\rho)}(X_i)}.
\end{equation}
 We can find the solution by performing a grid search over $\rho\in[0,1]$. The error $\hat e^2(\cdot)$ can be fit on historical or held-out data, or it can be gradually fine-tuned during the data collection process. Notice that, if the error estimation is consistent in the sense that $\|\hat{e}^2(X) - e^2(X)\|_{\infty}  \mathop{\rightarrow}\limits^{p} 0$ and if $\rho^*$ is unique, then $\hat\rho\mathop{\rightarrow}\limits^{p} \rho^*$, as assumed in Theorem \ref{thm:path}. Here, the assumption that $\hat{e}$
 converges to $e$ follows from classical arguments of uniform approximation of flexible estimators, and is common in the field of semiparametric inference. For instance, the widely-used doubly robust estimator \citep{glynn2010introduction, robins1994estimation}, which is closely related to our estimator, relies on consistent estimation of nuisance functions.

\subsection{Robustness to error function misspecification}

Given a path $\pi^{(\rho)}$, the previous discussion suggests finding $\hat \rho$ that minimizes an empirical approximation of the variance $\Var(\hat\theta^{\pi^{(\rho)}})$. This empirical approximation relies on an error estimate $\hat e(\cdot)$. If this function is severely misspecified, then the computed $\hat \rho$ might be far from $\rho^*$; more importantly, it might not even outperform uniform sampling.

To mitigate this concern, we instead consider a robust optimization problem that incorporates the possibility of $\hat e$ being misspecified:

\begin{equation}
\label{eq:robust_rho}
\rho_{\mathrm{robust}} = \mathop{\arg\min}\limits_{\rho} \max_{\boldsymbol{\epsilon} \in \mathcal C} \frac{1}{n} \sum_{i=1}^{n} \frac{\hat{e}^2(X_i) + \epsilon_i}{\pi^{(\rho)}(X_i)}.
\end{equation}
Here, $\boldsymbol{\epsilon} = (\epsilon_1,\dots,\epsilon_n)$ is the misspecification vector and $\mathcal C$ is the admissible set of misspecifications. This method allows for setting $\pi^{(\rho)}$ close to uniform if the misspecification set $\mathcal C$ is permissive enough.
Solving this minimax problem is computationally efficient, as long as $\mathcal{C}$ is a convex set. The outer problem can be solved via a one-dimensional grid search, while the inner problem is tractable due to convexity.

Now, the question is how we should set $\mathcal C$ in practice. Our default will be to simply use $\mathcal C = \{\boldsymbol{\epsilon} : \|\boldsymbol{\epsilon}\|_2 \leq c\}$, for some hyperparameter $c>0$. Empirically, $c$ can be set by cross-validation. Other choices of the set $\mathcal C$ are possible, such as bounding other norms of $\boldsymbol{\epsilon}$, for example $\|\boldsymbol{\epsilon}\|_1<c$. Empirically we found the $\ell_2$ norm to work the best, and in illustrative theoretical examples we reach the same conclusion; see Appendix \ref{appendix:constraint} for details. We also tried relative misspecification, in the sense that $\epsilon_i = \hat{e}^2(X_i)(1 + \eta_i)$, and constrained either the $\ell_1$ or $\ell_2$ norm of the relative perturbation $\eta$. We found that this does not perform as well.

\citet{zrnic2024active} briefly discussed a robustness proposal with linear interpolation. It assumes access to historical data, and otherwise it selects a default value for the coefficient, which has no guarantee to outperform uniform and active sampling. Our analysis is far more thorough and systematic, expanding the set of interpolating paths, not requiring historical data but incorporating a burn-in period, and adding a robustness constraint. These are all crucial for the practicality and reliability of the method; see Section \ref{sec:exp} for details.

There are other potential optimization objectives to take into  account robustness constraints. For example, one may penalize small values of $\rho$ in the objective \eqref{optimization_objective} with regularization, and similarly use cross-validation to choose the penalty parameter. We leave the investigation of such alternatives for future work.

\section{Robust sampling for general M-estimation}\label{sec:M_est}

Our sampling principle can be directly extended to general convex M-estimation, as considered in \cite{zrnic2024active}. We explain this step-by-step for completeness.

Recall that  we consider all inferential targets of the form $\theta^*=\arg \min _\theta \mathbb{E}\left[\ell_\theta(X, Y)\right]$, for a convex loss $\ell_\theta$. Denote $\ell_{\theta, i}=$ $\ell_\theta\left(X_i, Y_i\right), \ell_{\theta, i}^f=\ell_\theta\left(X_i, f\left(X_i\right)\right)$, and define $\nabla \ell_{\theta, i}$ and $\nabla \ell_{\theta, i}^f$ similarly. For an active sampling strategy $\pi$, the general active inference estimator is defined as:
\begin{equation}\label{path_m_est}
    \hat{\theta}^{\pi}=\underset{\theta}{\arg \min } ~L^\pi(\theta), \text { where } L^\pi(\theta)=\frac{1}{n} \sum_{i=1}^n\left(\ell_{\theta, i}^f+\left(\ell_{\theta, i}-\ell_{\theta, i}^f\right) \frac{\xi_i}{\pi(X_i)}\right).
\end{equation}
As before, $\xi_i \sim \text{Bern}(\pi(X_i))$ is the indicator of whether the label $Y_i$ is sampled. Following \cite{zrnic2024active}, we know that the asymptotic covariance matrix of $\hat\theta^\pi$ equals:
\[\Sigma^\pi = H_{\theta^*}^{-1} \operatorname{Var}\left(\nabla \ell_{\theta^*}^f+\left(\nabla \ell_{\theta^*}-\nabla \ell_{\theta^*}^f\right) \frac{\xi}{\pi(X)}\right) H_{\theta^*}^{-1},\]
where $H_{\theta^*}$ is the Hessian $H_{\theta^*}=\nabla^2 \mathbb{E}\left[\ell_{\theta^*}(X, Y)\right]$.

We again consider budget-preserving paths $\pi^{(\rho)}$ and tune the parameter $\rho$ such that we minimize the variance of the resulting estimator $\hat\theta^{\pi^{(\rho)}}$. Denote by $\Sigma_0$ and $\Sigma_1$ the asymptotic covariance matrices of the active inference estimator \eqref{path_m_est} using $\pi^{(0)} = \pi$ and $\pi^{(1)} = \pi^{\mathrm{unif}}$, respectively.

\begin{Theorem} \label{thm:path_m}
    Suppose $\pi^{(\rho)}$ is a budget-preserving path connecting $\pi$ and $\pi^{\mathrm{unif}}$. Given a coordinate $j$ of interest, let $\rho^* = \mathop{\arg \min}\limits_{\rho} \Sigma^{\pi^{(\rho)}}_{jj}$, and suppose $\hat{\rho} = \rho^* + o_P(1)$. Suppose further that $\hat{\theta}^{\pi^{(\rho^*)}} \xrightarrow{p} \theta^*$. Then, \[\sqrt{n}\left(\hat{\theta}^{\pi^{(\hat{\rho})}} -\theta^*\right) \xrightarrow{d} \mathcal{N}\left(0, \Sigma_{\rho^*}\right),\]
where $\Sigma_{\rho^*,jj} \leq \min\{\Sigma_{0,jj}, \Sigma_{1,jj}\}$.
\end{Theorem}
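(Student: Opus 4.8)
The plan is to reduce Theorem \ref{thm:path_m} to the scalar case already handled in Theorem \ref{thm:path} by focusing on the single coordinate $j$ of interest. The key observation is that the active inference estimator \eqref{path_m_est} is an M-estimator, so by standard asymptotic linearization (the same argument \citet{zrnic2024active} use to derive $\Sigma^\pi$), for a fixed rule $\pi^{(\rho)}$ we have the influence-function representation
\[
\sqrt{n}\left(\hat{\theta}^{\pi^{(\rho)}} - \theta^*\right) = H_{\theta^*}^{-1} \cdot \frac{1}{\sqrt{n}} \sum_{i=1}^n \psi_i^{(\rho)} + o_P(1),
\]
where $\psi_i^{(\rho)} = \nabla \ell_{\theta^*,i}^f + (\nabla \ell_{\theta^*,i} - \nabla \ell_{\theta^*,i}^f)\frac{\xi_i}{\pi^{(\rho)}(X_i)}$. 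First I would establish this linearization and record that the coordinate-$j$ asymptotic variance is $\Sigma^{\pi^{(\rho)}}_{jj} = n\,\Var\bigl(e_j^\top H_{\theta^*}^{-1}\hat\theta^{\pi^{(\rho)}}\bigr)$, which depends on $\rho$ only through the conditional second moment of the gradient-difference term, exactly mirroring the mean-estimation structure in \eqref{eq:var_meanest}.

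Next I would handle the data-dependent index $\hat\rho$. The difficulty is that $\hat\rho$ is random and correlated with the data, so one cannot simply plug it into a fixed-$\rho$ CLT. The standard remedy is a stochastic-equicontinuity or Slutsky-type argument: because $\rho \mapsto \psi_i^{(\rho)}$ is continuous (the path is continuous by Definition \ref{def: path}) and $\hat\rho \xrightarrow{p} \rho^*$ by hypothesis, the empirical process $\frac{1}{\sqrt n}\sum_i \psi_i^{(\rho)}$ is asymptotically continuous in $\rho$ near $\rho^*$, so
\[
\frac{1}{\sqrt n}\sum_{i=1}^n \psi_i^{(\hat\rho)} = \frac{1}{\sqrt n}\sum_{i=1}^n \psi_i^{(\rho^*)} + o_P(1).
\]
Combined with the assumed consistency $\hat\theta^{\pi^{(\rho^*)}} \xrightarrow{p} \theta^*$ (which lets me invoke the M-estimator machinery at $\rho^*$) and an application of the Lindeberg CLT to the $\rho^*$-term, this yields $\sqrt n(\hat\theta^{\pi^{(\hat\rho)}} - \theta^*) \xrightarrow{d} \mathcal N(0,\Sigma_{\rho^*})$ with the full covariance; extracting the $(j,j)$ entry gives the stated marginal.

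Finally, the inequality $\Sigma_{\rho^*,jj} \leq \min\{\Sigma_{0,jj},\Sigma_{1,jj}\}$ is immediate from the definition $\rho^* = \arg\min_\rho \Sigma^{\pi^{(\rho)}}_{jj}$: since $\rho=0$ and $\rho=1$ are both feasible points of the minimization and the path passes through $\pi$ and $\pi^{\mathrm{unif}}$ at those endpoints, the minimizing value can only be smaller. I expect the main obstacle to be the rigorous justification of the equicontinuity step that lets $\hat\rho$ replace $\rho^*$ inside the CLT; unlike the mean case, the M-estimation setting requires controlling the interplay between the randomness in $\hat\rho$, the linearization remainder, and the Bernoulli sampling indicators $\xi_i$ simultaneously. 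One must verify that the remainder in the M-estimator expansion is $o_P(1)$ \emph{uniformly} over $\rho$ in a neighborhood of $\rho^*$, which is where a Donsker or uniform-convergence condition on the gradient class $\{\psi^{(\rho)}\}$ does the real work.
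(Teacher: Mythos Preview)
Your proposal is sound and in fact takes a more careful route than the paper. The paper's proof is much shorter: it does not linearize via influence functions or invoke stochastic equicontinuity, but argues directly that continuity of $\rho\mapsto\pi^{(\rho)}(X_i)$ together with $\hat\rho=\rho^*+o_P(1)$ gives $\hat\theta^{\pi^{(\hat\rho)}}=\hat\theta^{\pi^{(\rho^*)}}+o_P(1)$, then applies Theorem~1 of \citet{zrnic2024active} (the ready-made M-estimator CLT for a fixed sampling rule) at $\rho=\rho^*$ using the assumed consistency, and transfers the limit by Slutsky; the inequality step is identical to yours. The tradeoff is that the paper's route is shorter because it offloads the asymptotics to an existing black-box result, but it is informal exactly where you are careful---passing from $\hat\theta^{\pi^{(\hat\rho)}}-\hat\theta^{\pi^{(\rho^*)}}=o_P(1)$ to the $\sqrt n$-scaled CLT tacitly requires the $o_P(n^{-1/2})$ rate, which is precisely the uniformity/equicontinuity issue you flag as the main obstacle. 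Your Donsker-based plan makes that step explicit and rigorous, at the cost of more machinery.
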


The consistency condition $\hat{\theta}^{\pi^{(\rho^*)}} \xrightarrow{p} \theta^*$ is standard; see the corresponding discussion in \cite{zrnic2024active} and \cite{angelopoulos2023ppi++}. For example, it is ensured when $L^\pi$ is convex, such as in the case of generalized linear models (GLMs), or when the parameter space is compact.

As in the case of mean estimation, we fit $\hat\rho$ by approximating the variance of the estimator $\Sigma^{\pi^{(\rho)}}$ and searching over $\rho$. However, here the notion of error $e^2(\cdot)$ we need to estimate is different. In particular, given the form of $\Sigma^\pi$, we let
\[\hat{\rho} = \mathop{\arg \min}\limits_{\rho} \frac{1}{n}\sum_{i=1}^{n} \frac{\hat{e}^2(X_i)}{\pi^{(\rho)}(X_i)},\] 
where $\hat{e}^2(X)$ aims to approximate $e^2(X) = \EE[((\nabla \ell_{\theta^*}-\nabla \ell_{\theta^*}^f)^{\top} h^{(j)})^2|X]$ and $h^{(j)}$ is the $j$-th column of $H_{\theta^*}^{-1}$. In the context of generalized linear models (GLMs), this error simplifies to $e^2(X) = \EE[(Y-f(X))^2|X] \cdot (X^\top h^{(j)})^2$. Therefore, as for mean estimation, the problem essentially reduces to estimating the error $\EE[(Y-f(X))^2|X]$. As before, if $\hat e^2$ consistently estimates $e^2$, then $\hat \rho$ consistently estimates $\rho^*$.

Finally, to protect against poorly estimated errors $\hat e$, we can incorporate an uncertainty set $\mathcal C$ around the error estimates just as before \eqref{eq:robust_rho}. Again, the only difference here is that the $\hat e^2(X_i)$'s are estimating a different notion of model error tailored to the inference problem at hand.

We summarize our general \emph{robust active inference} algorithm in Algorithm \ref{alg_path_robust}.

\begin{algorithm}[H]\label{alg_path_robust}
    \caption{Robust Active Inference}
    \KwIn{unlabeled data $X_1,\ldots, X_n$, labeling budget $n_b$, predictive model $f$, initial sampling rule $\pi$, budget-preserving path $\pi^{(\rho)}$, error estimator $\hat{e}^2(\cdot)$, robustness constraint $\mathcal{C}$}
    Solve the minimax problem $\rho_{\mathrm{robust}} = \mathop{\arg\min}\limits_{\rho\in[0,1]} \mathop{\max}\limits_{\boldsymbol{\epsilon} \in \mathcal{C}} \frac{1}{n}\sum_{i=1}^{n} \frac{\hat{e}^2(X_i) + \epsilon_i}{\pi^{(\rho)}(X_i)}$\\
    Sample labeling decisions according to $\pi^{(\rho_{\mathrm{robust}})}(X_i)$: $\xi_i \sim \operatorname{Bern}\left(\pi^{(\rho_{\mathrm{robust}})}(X_i)\right), i \in[n]$\\
    Collect labels $\left\{Y_i: \xi_i=1\right\}$\\
    \KwOut{estimator $\hat{\theta}^{\pi^{(\rho_{\mathrm{robust}})}}=\underset{\theta}{\arg \min } ~L^{\pi^{(\rho_{\mathrm{robust}})}}$, as defined in Eq.~\eqref{path_m_est}}
\end{algorithm}

\section{Experiments}\label{sec:exp}

We turn to evaluating the performance of our robust sampling approach empirically. Each of the following subsections is dedicated to a different experiment using social science research data.
Section \ref{sec:post-election-survey} measures presidential approval, Section \ref{sec:census-data} analyzes US age–income patterns, and Section \ref{sec:css-llm} applies language models to score text on social attributes such as political bias.
On each of these datasets, we use the following methods to collect labels: (1) uniform sampling, which essentially recovers prediction-powered inference \cite{angelopoulos2023prediction}; (2) standard uncertainty-based active sampling~\cite{zrnic2024active}; and (3) our robust active method as per Algorithm~\ref{alg_path_robust}.
Each dataset will use a different base predictive model $f$, which we describe therein. We set the target coverage level to be $0.9$ throughout. 

The main metric used for the comparison is effective sample size. To define this metric formally, consider the baseline estimator that samples uniformly at random, i.e., according to $\pi^{\mathrm{unif}}$. Its effective sample size is simply its budget $n_b$. For other estimators, we say that the effective sample size is equal to  $n_{\mathrm{eff}}$ if the estimator achieves the same variance as the baseline estimator with budget $n_{\mathrm{eff}}$. For example, if given budget $n_b=100$ the estimator achieves the same variance as the baseline estimator with double the budget, then the estimator has $n_{\mathrm{eff}} = 200$. A larger $n_{\mathrm{eff}}$ indicates a more efficient estimator. In the case where the effective sample size falls below the budget, $n_{\mathrm{eff}} < n_b$, the estimator performs worse than the baseline. We show one standard deviation around the effective sample size in all plots, estimated over $500$ trials.

We also plot empirical estimates of the methods' coverage. We estimate the coverage by resampling the data, constructing confidence intervals for each resampling, and calculating the proportion of times the true parameter value (approximated by the full-data estimate of the target $\theta^*$) falls within the constructed intervals. 
This approach allows us to assess how reliably each method achieves the target coverage level. We resample 500 times to estimate the coverage. (We note that this approach yields conservative coverage estimates when $n_b$ is large, because we have $n-n_b$ ``fresh'' labels to approximate $\theta^*$.) From the theory, we know that the coverage should be exactly $0.9$ for all baselines.

\subsection{Post-election survey research}
\label{sec:post-election-survey}

Following \citep{zrnic2024active}, we evaluate the different methods on survey data collected by the Pew Research Center following the 2020 United States presidential election, aiming at gauging people’s approval of the presidential candidates’ political messaging \cite{atp79}. We aim to estimate the approval rate $\theta^* = \EE[Y]$, where $Y\in\{0,1\}$ is a binary indicator of approval of Biden's political messaging.
We use a multilayer perceptron (MLP) as our predictive model $f$. At the beginning, we have a ``burn-in'' period where we collect all burn-in labels $Y_i$ and we use this burn-in data to estimate the error function $\hat e(\cdot)$. Afterwards, we use the fitted function to run robust active inference, as per Algorithm~\ref{alg_path_robust}. Naturally, the burn-in period counts towards the overall labeling budget $n_b$.

We study three questions: (1) the effect of tuning $\rho$ along the budget-preserving path, without incorporating a robustness constraint $\mathcal C$; (2) the effect of tuning $\rho$ along the path and the robust optimization over $\mathcal C$ combined; and (3) the performance of different budget-preserving paths.

\begin{figure}[t]
    \centering
    \includegraphics[width=0.9\linewidth]{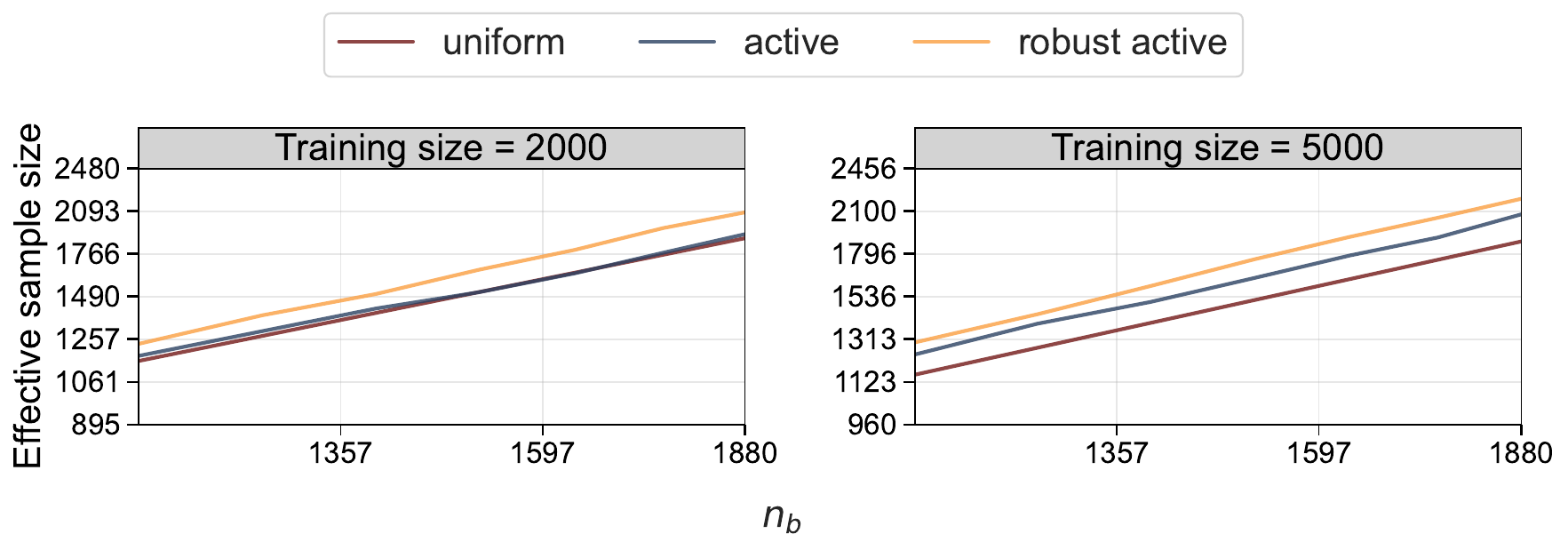}
   \caption{\textbf{Effective sample size on Pew post-election survey data}, for different dataset sizes used to train $f$. We compare uniform, active, and robust active sampling, for different values of the sampling budget $n_b$. The target of inference is the approval rate of a presidential candidate. We show the mean and one standard deviation (see Appendix \ref{appendix:plots}) of the effective sample size estimated over $500$ trials; in each trial we independently sample the observed labels.} 
    \label{fig:election_without_robust}
\end{figure}

\paragraph{Tuning along the budget-preserving path.}
First, we conduct an experiment without the robustness set $\mathcal C$, only tuning the parameter $\hat \rho$ along the budget-preserving path. We choose the geometric path from Example \ref{ex:geometric_path}. To implement active inference, we use $\pi(x) \propto \min\{f(x), 1-f(x)\}$, in which $f(x)$ is the predicted probability that the label takes on the value $1$, as considered in \cite{zrnic2024active}. See Figure~\ref{fig:election_without_robust} for the results. We consider two training dataset sizes used to train $f$, allowing us to see the results for a less accurate $f$ (left) and a more accurate one (right).
We find that, even without robust optimization but only optimizing along the budget-preserving path, robust active inference can lead to noticeable improvements in terms of power compared to naive uncertainty-based active sampling and uniform sampling. The performance of standard active inference crucially depends on the quality of $f$ and its uncertainties. We defer the corresponding coverage plots to Appendix \ref{appendix:more_exp}.

\paragraph{Incorporating robustness.}
One strategy proposed by \citet{zrnic2024active} is to estimate $\hat e$ and set $\pi$ proportional to $\hat e$. With this choice, without the additional step of robust optimization, our robust sampling approach would trivially estimate $\hat\rho = 0$ (a proof of this claim can be found in Appendix~\ref{appendix:proofs}). We show that incorporating the robustness constraint resolves this issue when $\pi(x) \propto \hat e(x)$. As in the previous case, we use the geometric path and an MLP as the predictive model. The results are shown in Figure \ref{fig:election_robust}. Recall, $\hat e$ is estimated from the burn-in data. Thus, the longer the burn-in period, the better the fit $\hat e$. This is consistent with the observation that active inference gradually outperforms uniform sampling as the burn-in period grows. However, when there is little data to fit $\hat e$, active sampling leads to a significantly higher variance than uniform sampling. Our robust sampling approach is never worse than either baseline, across all burn-in data sizes. This is explained by the fact that, when the fit $\hat e$ is poor, the constraint set $\mathcal C$ chosen via cross-validation is large, resulting in a large $\rho_{\mathrm{robust}}$, thus pushing the sampling rule closer to uniform. In Figure \ref{fig:rho_robust} (left), we plot the optimized value $\rho_{\mathrm{robust}}$ for different burn-in sizes. 
As expected, $\rho_{\mathrm{robust}}$ decreases, which means that the optimal strategy gradually moves from uniform sampling toward standard active sampling as the quality of $\hat e$ improves.

\begin{figure}[!htbp]
    \centering
    \includegraphics[width=1\linewidth]{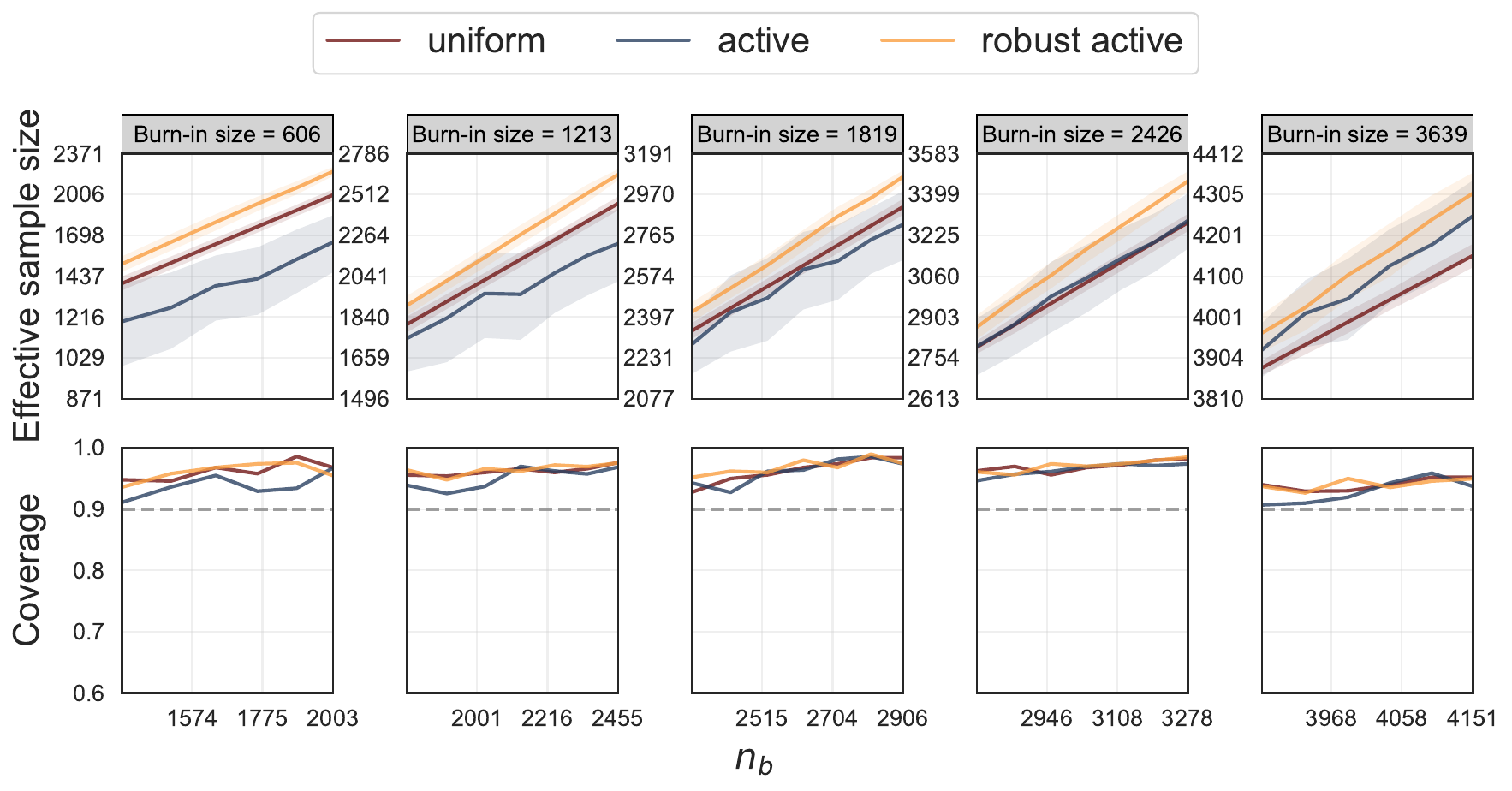}
    \caption{
    \textbf{Effective sample size (top) and coverage (bottom) on Pew post-election survey data}, for varying burn-in dataset sizes with respect to different proportions of the data. We compare uniform, active, and robust active sampling, for different values of the sampling budget $n_b$. The target of inference is the approval rate of a presidential candidate. We show the mean and one standard deviation of the effective sample size estimated over $500$ trials; in each trial we independently sample the observed labels.}
    \label{fig:election_robust}
\end{figure}

\begin{figure}[!htbp]
    \centering
    \includegraphics[width=0.9\linewidth]{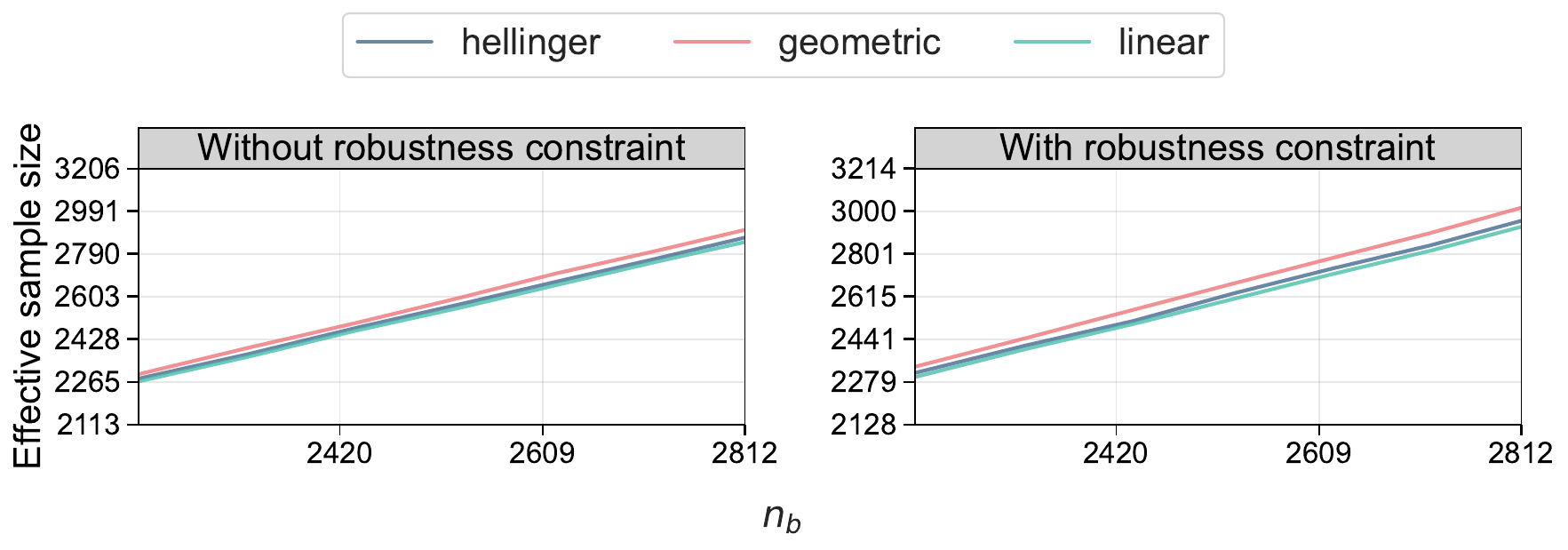}
    \caption{\textbf{Effective sample size for different budget-preserving paths on Pew post-election survey data}, without (left) and with (right) a robustness constraint $\mathcal C$. In both cases, the geometric path leads to the largest effective sample size. The target of inference is the same as in Figure \ref{fig:election_robust}. We show the mean and one standard deviation (see Appendix \ref{appendix:plots}) of the effective sample size estimated over $500$ trials; in each trial we independently sample the observed labels.} 
    \label{fig:guide_ess}
\end{figure}

\paragraph{Choice of budget-preserving path.}
We have thus far used the geometric path as our budget-preserving path. In Figure~\ref{fig:guide_ess} we compare three budget-preserving paths: the linear path, the geometric path, and the Hellinger path (see Appendix \ref{appendix:geodesic}). On the post-election survey dataset, Figure~\ref{fig:guide_ess} shows that the geometric path is the best of the three chosen paths, regardless of whether or not robust optimization over $\mathcal C$ is used. Therefore, as a practical default, we recommend using the geometric path. It has been stress-tested and has consistently demonstrated strong performance in our evaluations. We believe this is a good tradeoff between simplicity and performance. For improved performance with a better choice of path, the practitioner might want to tune it in a data-driven way; for example, based on the estimated variance on a small held-out dataset.

\subsection{Census data analysis}
\label{sec:census-data}

We study the annual American Community Survey (ACS) Public Use Microdata Sample (PUMS) collected
 by the US Census Bureau \cite{ding2021retiring}. We are interested in investigating the relationship between age and income in survey data collected in California in 2019, controlling for sex. We estimate the age coefficient $\theta^*$ of the linear regression vector when regressing income on age and sex. We use an XGBoost model \cite{chen2016xgboost} to predict income $Y$ from available demographic covariates. As in the previous problem, we set $\pi(x) \propto \hat e(x)$, as in \cite{zrnic2024active}, and fit $\hat e$ using burn-in data. We use the geometric path and incorporate the robust optimization over $\mathcal C$. We show the results in Figure \ref{fig:census}. Again, we observe that the robust approach outperforms both standard active sampling and uniform sampling for different qualities of the error estimate $\hat e(\cdot)$, corresponding to different burn-in dataset sizes. Standard active inference, on the other hand, is very sensitive to the quality of $\hat e$. In Figure \ref{fig:rho_robust} (right), we plot the optimized value $\rho_{\mathrm{robust}}$ for different burn-in sizes. As in the previous example, $\rho_{\mathrm{robust}}$ decreases as the quality of $\hat e$ improves, as expected. We include corresponding coverage plots in Appendix \ref{appendix:more_exp}.

\begin{figure}[t]
    \centering
    \includegraphics[width=1\linewidth]{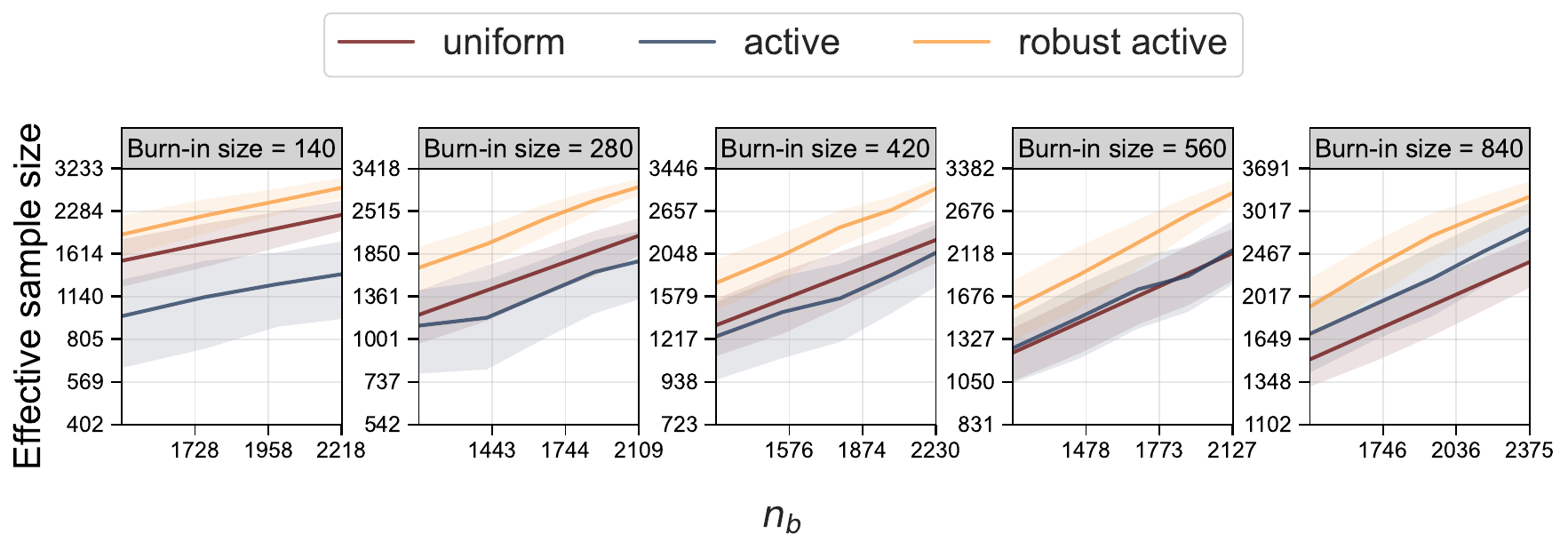}
    \caption{
    \textbf{Effective sample size on US Census data}, for varying burn-in dataset sizes. We compare uniform, active, and robust active sampling, for different values of the sampling budget $n_b$. The target of inference is the relationship between age and income, estimated via a linear regression. We show the mean and one standard deviation of the effective sample size estimated over $500$ trials; in each trial we independently sample the observed labels.
    }
    \label{fig:census}
\end{figure}

\begin{figure}[t]
    \centering
    \includegraphics[width=0.9\linewidth]{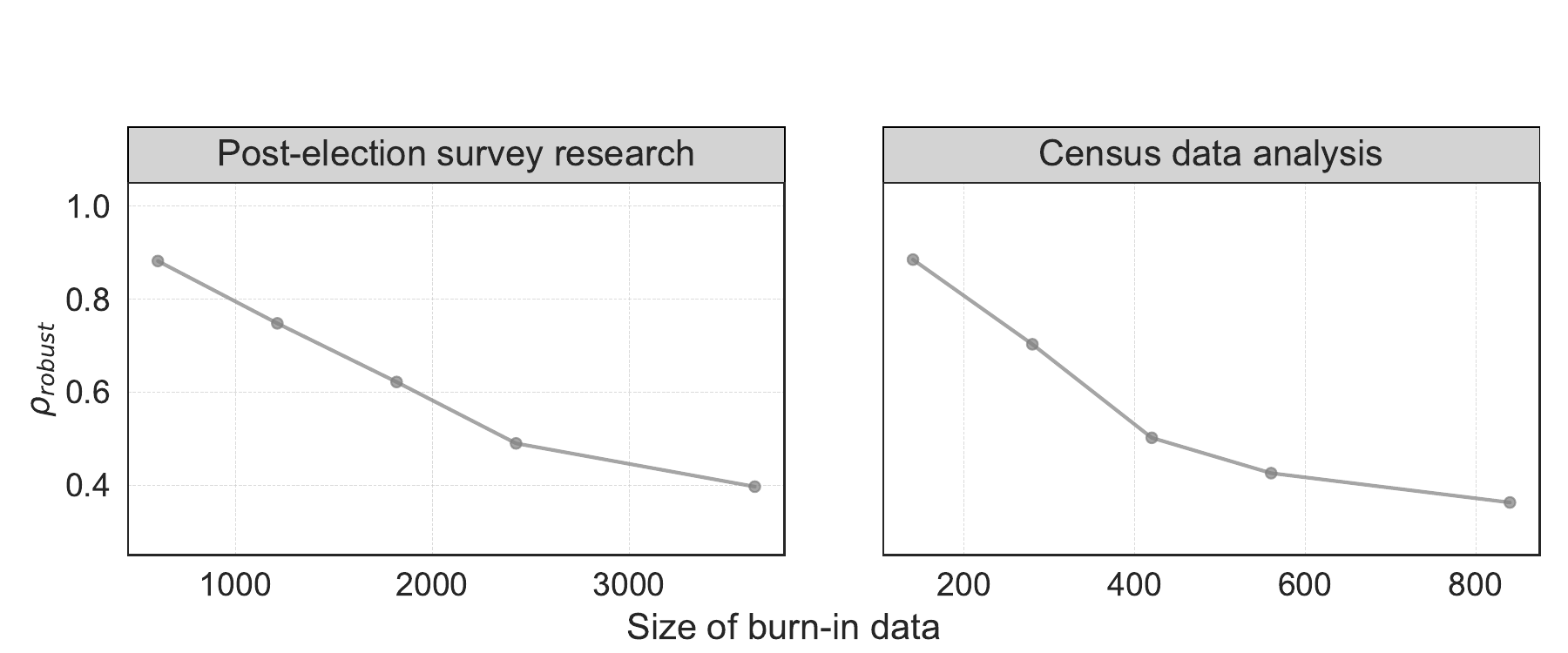}
       \caption{\textbf{Optimized value $\rho_{\mathrm{robust}}$ along the geometric path} as a function of the size of the burn-in data for the post-election survey data (left) and US Census data (right).}
    \label{fig:rho_robust}
\end{figure}

\subsection{Computational social science with language models}
\label{sec:css-llm}

We study three text annotation tasks used for computational social science research. In each task, we have text instances $X_i$ and we seek to collect labels $Y_i$ related to the text's sentiment, political leaning, and so on. We wish to use a large language model (LLM) $f$ to predict the high-quality annotations $Y_i$, which are typically collected through laborious human annotation. A natural way of actively sampling human annotations is according to the confidence of the language model \cite{gligoric2024can, li2023coannotating}. \citet{tian2023just} propose prompting LLMs to verbalize their confidence in the provided answer, and they find that this results in fairly calibrated confidence scores. \citet{gligoric2024can} find that such scores can be useful in actively sampling human annotations.
We use GPT-4o annotations and confidences collected by \citet{gligoric2024can}. We apply active inference with $\pi(X_i)\propto (1-C_i)$, where $C_i$ is the collected confidence score of the language model for prompt $X_i$. This can be a brittle strategy, since the scores are often overconfident and thus result in very small sampling probabilities, which can blow up the estimator variance through inverse probability weighting. For robust active inference, we use the geometric path and robust optimization with an $\ell_2$ constraint set $\mathcal C$, as before.

\paragraph{Political bias.}
In the first task, the goal is to study the political leaning of media articles, using the data curated by \citet{baly2020we}. The labels $Y$ are one of \texttt{left}, \texttt{centrist}, or \texttt{right}. The inferential target is the prevalence of right-leaning articles:
$\theta^*=\mathbb{E}[\mathbf{1}\{Y= \texttt{right}\}]$.

\paragraph{Politeness.}
The next task is to estimate how certain linguistic devices impact the perceived politeness of online requests. We use the dataset of requests from Wikipedia and StackExchange curated by \citet{danescu2013computational}. We study how the presence of hedging in the request, $X_{\mathrm{hedge}}\in\{0,1\}$, impacts whether a text is seen as polite, $Y\in\{0,1\}$. Formally, $\theta^*$ is this effect estimated via a logistic regression with an intercept: $\operatorname{logit}\left(\mathbb{P}\left(Y=1 \mid X_{\text {hedge }}\right)\right)=\theta_0+$ $\theta^* X_{\text {hedge }}$.

\paragraph{Misinformation.}
Finally, we study the prevalence of misinformation in news headlines, using the dataset collected by \citet{gabrel2014recent}. The labels $Y\in\{0,1\}$ indicate whether a headline contains misinformation. The inferential target is the prevalence of misinformation, $\theta^* = \EE[Y]$.

\begin{figure}[t]
    \centering
    \includegraphics[width=0.9\linewidth]{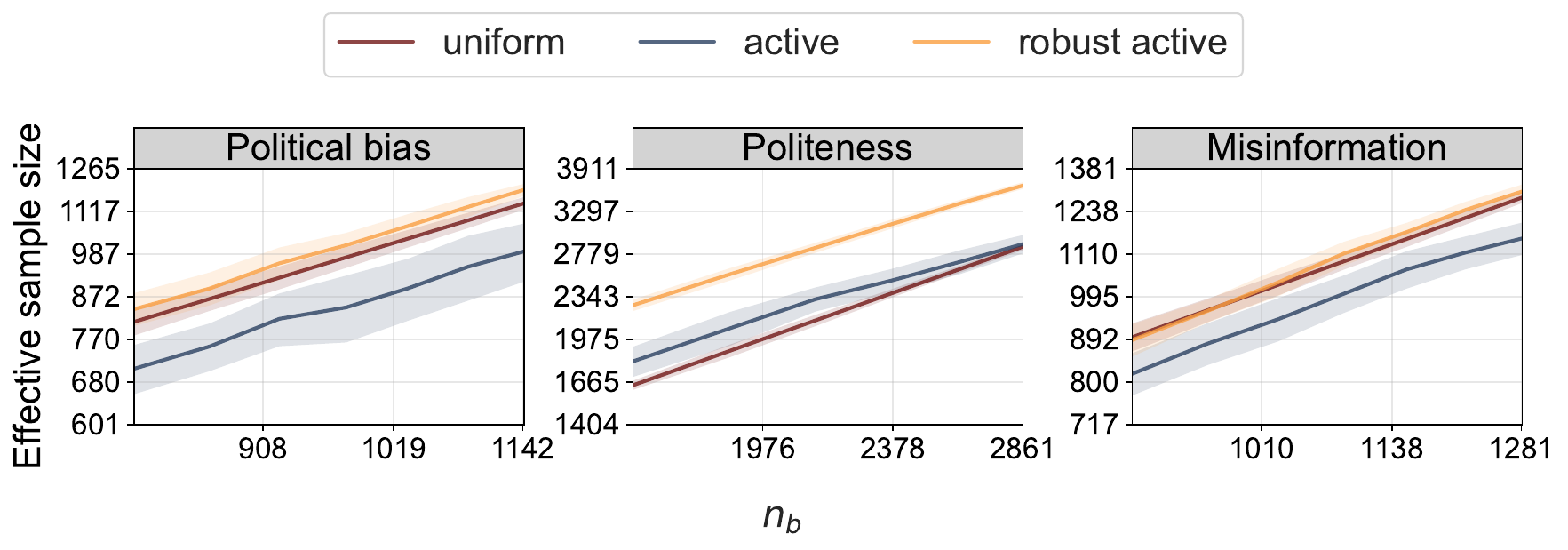}
    \caption{\textbf{Effective sample size on social science text annotation datasets}. We compare uniform, active, and robust active sampling, for different values of the sampling budget $n_b$. The targets of inference are (left to right) the prevalence of right-leaning political bias, the relationship between hedging and politeness, and the prevalence of misinformation. We show the mean and one standard deviation of the effective sample size estimated over $500$ trials; in each trial we independently sample the observed labels.}
    \label{fig:llm_ess}
\end{figure}

We show the results in Figure \ref{fig:llm_ess}. Across all tasks, the robust approach is essentially never worse than uniform sampling or active inference, in cases even outperforming both by a large margin. Standard active inference often leads to large intervals, given that sampling directly according to the model's verbalized uncertainty leads to instability through inverse probability weighting. 
We include the corresponding coverage plots in Appendix \ref{appendix:more_exp}.

\section{Conclusion}\label{sec:conclusion}
We presented robust sampling strategies for active inference: a principled hedge between uniform and conventional active sampling. By selecting an optimal tuning parameter $\rho$ along a budget-preserving path, robust active inference ensures performance that is no worse than with standard active sampling, and it reduces to near-uniform sampling when uncertainty scores are unreliable.
Furthermore, the estimator can even surpass standard active inference given reliable uncertainties.

Many directions remain for future work. For example, it would be valuable to understand how to optimally choose the constraint set $\mathcal C$, or at least how to choose between several different constraint sets. As presented, our procedure is sensitive to the choice of $\mathcal C$ and may result in sampling rules that are too close or too far from uniform if this set is chosen poorly. We also leave investigations into the optimal budget-preserving path, and practical heuristics for how a practitioner might effectively choose a good path in a data-driven way, for future work.

\section*{Acknowledgement}
EJC was supported by the Office of Naval Research grant N00014-24-1-2305, the National Science Foundation grant DMS-2032014, and the Simons Foundation under award 814641. 

\bibliography{ref}

@inproceedings{zrnic2024active,
  title={Active statistical inference},
  author={Zrnic, Tijana and Cand{\`e}s, Emmanuel J},
  booktitle={Proceedings of the 41st International Conference on Machine Learning},
  pages={62993--63010},
  year={2024}
}

@inproceedings{xie2016transfer,
  title={Transfer learning from deep features for remote sensing and poverty mapping},
  author={Xie, Michael and Jean, Neal and Burke, Marshall and Lobell, David and Ermon, Stefano},
  booktitle={Proceedings of the AAAI conference on artificial intelligence},
  volume={30},
  year={2016}
}

@article{jumper2021highly,
  title={Highly accurate protein structure prediction with AlphaFold},
  author={Jumper, John and Evans, Richard and Pritzel, Alexander and Green, Tim and Figurnov, Michael and Ronneberger, Olaf and Tunyasuvunakool, Kathryn and Bates, Russ and {\v{Z}}{\'\i}dek, Augustin and Potapenko, Anna and others},
  journal={nature},
  volume={596},
  number={7873},
  pages={583--589},
  year={2021},
  publisher={Nature Publishing Group}
}

@article{angelopoulos2023prediction,
  title={Prediction-powered inference},
  author={Angelopoulos, Anastasios N and Bates, Stephen and Fannjiang, Clara and Jordan, Michael I and Zrnic, Tijana},
  journal={Science},
  volume={382},
  number={6671},
  pages={669--674},
  year={2023},
  publisher={American Association for the Advancement of Science}
}

@article{angelopoulos2023ppi++,
  title={{PPI}++: Efficient prediction-powered inference},
  author={Angelopoulos, Anastasios N and Duchi, John C and Zrnic, Tijana},
  journal={arXiv preprint arXiv:2311.01453},
  year={2023}
}

@article{zrnic2024cross,
  title={Cross-prediction-powered inference},
  author={Zrnic, Tijana and Cand{\`e}s, Emmanuel J},
  journal={Proceedings of the National Academy of Sciences},
  volume={121},
  number={15},
  pages={e2322083121},
  year={2024},
  publisher={National Acad Sciences}
}

@article{seguy2015principal,
  title={Principal geodesic analysis for probability measures under the optimal transport metric},
  author={Seguy, Vivien and Cuturi, Marco},
  journal={Advances in Neural Information Processing Systems},
  volume={28},
  year={2015}
}

@article{liero2016optimal,
  title={Optimal transport in competition with reaction: The Hellinger--Kantorovich distance and geodesic curves},
  author={Liero, Matthias and Mielke, Alexander and Savar{\'e}, Giuseppe},
  journal={SIAM Journal on Mathematical Analysis},
  volume={48},
  number={4},
  pages={2869--2911},
  year={2016},
  publisher={SIAM}
}

@article{ben2009robust,
  title={Robust Optimization},
  author={Ben-Tal, A},
  journal={Princeton University Press},
  volume={2},
  pages={35--53},
  year={2009}
}

@article{gabrel2014recent,
  title={Recent advances in robust optimization: An overview},
  author={Gabrel, Virginie and Murat, C{\'e}cile and Thiele, Aur{\'e}lie},
  journal={European journal of operational research},
  volume={235},
  number={3},
  pages={471--483},
  year={2014},
  publisher={Elsevier}
}

@article{beyer2007robust,
  title={Robust optimization--a comprehensive survey},
  author={Beyer, Hans-Georg and Sendhoff, Bernhard},
  journal={Computer methods in applied mechanics and engineering},
  volume={196},
  number={33-34},
  pages={3190--3218},
  year={2007},
  publisher={Elsevier}
}

@incollection{huber1992robust,
  title={Robust estimation of a location parameter},
  author={Huber, Peter J},
  booktitle={Breakthroughs in statistics: Methodology and distribution},
  pages={492--518},
  year={1992},
  publisher={Springer}
}

@article{zoubir2012robust,
  title={Robust estimation in signal processing: A tutorial-style treatment of fundamental concepts},
  author={Zoubir, Abdelhak M and Koivunen, Visa and Chakhchoukh, Yacine and Muma, Michael},
  journal={IEEE Signal Processing Magazine},
  volume={29},
  number={4},
  pages={61--80},
  year={2012},
  publisher={IEEE}
}

@book{staudte2011robust,
  title={Robust estimation and testing},
  author={Staudte, Robert G and Sheather, Simon J},
  year={2011},
  publisher={John Wiley \& Sons}
}

@article{ramoni2001robust,
  title={Robust learning with missing data},
  author={Ramoni, Marco and Sebastiani, Paola},
  journal={Machine Learning},
  volume={45},
  pages={147--170},
  year={2001},
  publisher={Springer}
}

@book{steinhardt2018robust,
  title={Robust learning: Information theory and algorithms},
  author={Steinhardt, Jacob},
  year={2018},
  publisher={Stanford University}
}

@article{cacciarelli2024robust,
  title={Robust online active learning},
  author={Cacciarelli, Davide and Kulahci, Murat and Tyssedal, John S{\o}lve},
  journal={Quality and Reliability Engineering International},
  volume={40},
  number={1},
  pages={277--296},
  year={2024},
  publisher={Wiley Online Library}
}

@inproceedings{guo2022robust,
  title={Robust active learning: Sample-efficient training of robust deep learning models},
  author={Guo, Yuejun and Hu, Qiang and Cordy, Maxime and Papadakis, Mike and Traon, Yves Le},
  booktitle={Proceedings of the 1st International Conference on AI Engineering: Software Engineering for AI},
  pages={41--42},
  year={2022}
}

@inproceedings{chen2016xgboost,
  title={Xgboost: A scalable tree boosting system},
  author={Chen, Tianqi and Guestrin, Carlos},
  booktitle={Proceedings of the 22nd acm sigkdd international conference on knowledge discovery and data mining},
  pages={785--794},
  year={2016}
}

@article{gligoric2024can,
  title={Can Unconfident LLM Annotations Be Used for Confident Conclusions?},
  author={Gligori{\'c}, Kristina and Zrnic, Tijana and Lee, Cinoo and Cand{\`e}s, Emmanuel J and Jurafsky, Dan},
  journal={arXiv preprint arXiv:2408.15204},
  year={2024}
}

@article{ji2025predictions,
  title={Predictions as surrogates: Revisiting surrogate outcomes in the age of ai},
  author={Ji, Wenlong and Lei, Lihua and Zrnic, Tijana},
  journal={arXiv preprint arXiv:2501.09731},
  year={2025}
}

@article{kluger2025prediction,
  title={Prediction-Powered Inference with Imputed Covariates and Nonuniform Sampling},
  author={Kluger, Dan M and Lu, Kerri and Zrnic, Tijana and Wang, Sherrie and Bates, Stephen},
  journal={arXiv preprint arXiv:2501.18577},
  year={2025}
}

@article{ao2024prediction,
  title={Prediction-Guided Active Experiments},
  author={Ao, Ruicheng and Chen, Hongyu and Simchi-Levi, David},
  journal={arXiv preprint arXiv:2411.12036},
  year={2024}
}

@article{zrnic2024note,
  title={A note on the prediction-powered bootstrap},
  author={Zrnic, Tijana},
  journal={arXiv preprint arXiv:2405.18379},
  year={2024}
}

@book{burago2001course,
  title={A course in metric geometry},
  author={Burago, Dmitri and Burago, Yuri and Ivanov, Sergei},
  volume={33},
  year={2001},
  publisher={American Mathematical Society Providence}
}

@misc{atp79,
   author = {Pew},
   year = 2020,
   title = {American Trends Panel ({ATP}) Wave 79},
url = {https://www.pewresearch.org/science/dataset/american-trends-panel-wave-79/}
}

@article{chhikara2025mind,
  title={Mind the confidence gap: Overconfidence, calibration, and distractor effects in large language models},
  author={Chhikara, Prateek},
  journal={arXiv preprint arXiv:2502.11028},
  year={2025}
}

@article{zhang2024calibrating,
  title={Calibrating the confidence of large language models by eliciting fidelity},
  author={Zhang, Mozhi and Huang, Mianqiu and Shi, Rundong and Guo, Linsen and Peng, Chong and Yan, Peng and Zhou, Yaqian and Qiu, Xipeng},
  journal={arXiv preprint arXiv:2404.02655},
  year={2024}
}

@article{zhou2023navigating,
  title={Navigating the grey area: How expressions of uncertainty and overconfidence affect language models},
  author={Zhou, Kaitlyn and Jurafsky, Dan and Hashimoto, Tatsunori},
  journal={arXiv preprint arXiv:2302.13439},
  year={2023}
}

@article{fisch2024stratified,
  title={Stratified prediction-powered inference for hybrid language model evaluation},
  author={Fisch, Adam and Maynez, Joshua and Hofer, R Alex and Dhingra, Bhuwan and Globerson, Amir and Cohen, William W},
  journal={arXiv preprint arXiv:2406.04291},
  year={2024}
}

@article{miao2023assumption,
  title={Assumption-lean and data-adaptive post-prediction inference},
  author={Miao, Jiacheng and Miao, Xinran and Wu, Yixuan and Zhao, Jiwei and Lu, Qiongshi},
  journal={arXiv preprint arXiv:2311.14220},
  year={2023}
}

@article{miao2024task,
  title={Task-agnostic machine-learning-assisted inference},
  author={Miao, Jiacheng and Lu, Qiongshi},
  journal={arXiv preprint arXiv:2405.20039},
  year={2024}
}

@article{gan2024prediction,
  title={Prediction de-correlated inference: A safe approach for post-prediction inference},
  author={Gan, Feng and Liang, Wanfeng and Zou, Changliang},
  journal={Australian \& New Zealand Journal of Statistics},
  volume={66},
  number={4},
  pages={417--440},
  year={2024},
  publisher={Wiley Online Library}
}

@article{mccaw2023leveraging,
  title={Leveraging a surrogate outcome to improve inference on a partially missing target outcome},
  author={McCaw, Zachary R and Gaynor, Sheila M and Sun, Ryan and Lin, Xihong},
  journal={Biometrics},
  volume={79},
  number={2},
  pages={1472--1484},
  year={2023},
  publisher={Wiley Online Library}
}

@article{wang2020methods,
  title={Methods for correcting inference based on outcomes predicted by machine learning},
  author={Wang, Siruo and McCormick, Tyler H and Leek, Jeffrey T},
  journal={Proceedings of the National Academy of Sciences},
  volume={117},
  number={48},
  pages={30266--30275},
  year={2020},
  publisher={National Academy of Sciences}
}

@article{gronsbell2024another,
  title={Another look at inference after prediction},
  author={Gronsbell, Jessica and Gao, Jianhui and Shi, Yaqi and McCaw, Zachary R and Cheng, David},
  journal={arXiv preprint arXiv:2411.19908},
  year={2024}
}

@article{song2024general,
  title={A general {M}-estimation theory in semi-supervised framework},
  author={Song, Shanshan and Lin, Yuanyuan and Zhou, Yong},
  journal={Journal of the American Statistical Association},
  volume={119},
  number={546},
  pages={1065--1075},
  year={2024},
  publisher={Taylor \& Francis}
}

@article{ding2021retiring,
  title={Retiring adult: New datasets for fair machine learning},
  author={Ding, Frances and Hardt, Moritz and Miller, John and Schmidt, Ludwig},
  journal={Advances in neural information processing systems},
  volume={34},
  pages={6478--6490},
  year={2021}
}

@inproceedings{tian2023just,
  title={Just Ask for Calibration: Strategies for Eliciting Calibrated Confidence Scores from Language Models Fine-Tuned with Human Feedback},
  author={Tian, Katherine and Mitchell, Eric and Zhou, Allan and Sharma, Archit and Rafailov, Rafael and Yao, Huaxiu and Finn, Chelsea and Manning, Christopher D},
  booktitle={Proceedings of the 2023 Conference on Empirical Methods in Natural Language Processing},
  pages={5433--5442},
  year={2023}
}

@inproceedings{baly2020we,
  title={We Can Detect Your Bias: Predicting the Political Ideology of News Articles},
  author={Baly, Ramy and Da San Martino, Giovanni and Glass, James and Nakov, Preslav},
  booktitle={Proceedings of the 2020 Conference on Empirical Methods in Natural Language Processing (EMNLP)},
  pages={4982--4991},
  year={2020}
}

@inproceedings{danescu2013computational,
  title={A Computational Approach to Politeness with Application to Social Factors},
  author={Danescu-Niculescu-Mizil, Cristian and Sudhof, Moritz and Jurafsky, Dan and Leskovec, Jure and Potts, Christopher},
  booktitle={51st Annual Meeting of the Association for Computational Linguistics},
  pages={250--259},
  year={2013},
  organization={ACL}
}

@inproceedings{li2023coannotating,
  title={CoAnnotating: Uncertainty-Guided Work Allocation between Human and Large Language Models for Data Annotation},
  author={Li, Minzhi and Shi, Taiwei and Ziems, Caleb and Kan, Min-Yen and Chen, Nancy and Liu, Zhengyuan and Yang, Diyi},
  booktitle={Proceedings of the 2023 Conference on Empirical Methods in Natural Language Processing},
  pages={1487--1505},
  year={2023}
}

@article{robins1994estimation,
  title={Estimation of regression coefficients when some regressors are not always observed},
  author={Robins, James M and Rotnitzky, Andrea and Zhao, Lue Ping},
  journal={Journal of the American statistical Association},
  volume={89},
  number={427},
  pages={846--866},
  year={1994},
  publisher={Taylor \& Francis}
}

@article{robins1995semiparametric,
  title={Semiparametric efficiency in multivariate regression models with missing data},
  author={Robins, James M and Rotnitzky, Andrea},
  journal={Journal of the American Statistical Association},
  volume={90},
  number={429},
  pages={122--129},
  year={1995},
  publisher={Taylor \& Francis}
}

@incollection{rubin2018multiple,
  title={Multiple imputation},
  author={Rubin, Donald B},
  booktitle={Flexible imputation of missing data, second edition},
  pages={29--62},
  year={2018},
  publisher={Chapman and Hall/CRC}
}

@book{tsiatis2006semiparametric,
  title={Semiparametric theory and missing data},
  author={Tsiatis, Anastasios A},
  volume={4},
  year={2006},
  publisher={Springer}
}

@article{glynn2010introduction,
  title={An introduction to the augmented inverse propensity weighted estimator},
  author={Glynn, Adam N and Quinn, Kevin M},
  journal={Political analysis},
  volume={18},
  number={1},
  pages={36--56},
  year={2010},
  publisher={Cambridge University Press}
}

@article{settles2009active,
  title={Active learning literature survey},
  author={Settles, Burr},
  year={2009},
  publisher={University of Wisconsin-Madison Department of Computer Sciences}
}

@article{hahn2011adaptive,
  title={Adaptive experimental design using the propensity score},
  author={Hahn, Jinyong and Hirano, Keisuke and Karlan, Dean},
  journal={Journal of Business \& Economic Statistics},
  volume={29},
  number={1},
  pages={96--108},
  year={2011},
  publisher={Taylor \& Francis}
}

@inproceedings{roy2001toward,
  title={Toward optimal active learning through sampling estimation of error reduction},
  author={Roy, Nicholas and McCallum, Andrew},
  booktitle={ICML},
  volume={1},
  number={3},
  pages={5},
  year={2001},
  organization={Citeseer}
}

@book{fedorov2013theory,
  title={Theory of optimal experiments},
  author={Fedorov, Valerii Vadimovich},
  year={2013},
  publisher={Elsevier}
}

@inproceedings{kossen2021active,
  title={Active testing: Sample-efficient model evaluation},
  author={Kossen, Jannik and Farquhar, Sebastian and Gal, Yarin and Rainforth, Tom},
  booktitle={International Conference on Machine Learning},
  pages={5753--5763},
  year={2021},
  organization={PMLR}
}
\bibliographystyle{plainnat}

\newpage
\appendix

\section{Proofs}\label{appendix:proofs}

\subsection{Proof of Theorem \ref{thm:path}}
By the definition of $\hat{\theta}^{\pi^{(\rho)}}$, we have 
$$
\hat{\theta}^{\pi^{(\rho)}}=\frac{1}{n} \sum_{i=1}^n\left(f\left(X_i\right)+\left(Y_i-f\left(X_i\right)\right) \frac{\xi_i}{\pi^{(\rho)}\left(X_i\right)}\right).
$$
From the assumption, we have $\hat{\rho} = \rho^* + o_P(1)$. 
By the continuity of the budget-preserving path $\pi^{(\rho)}$, it follows that $\pi^{(\hat{\rho})}(X_i) = \pi^{(\rho^*)}(X_i) + o_P(1)$ for any $i\in\{1, \ldots, n\}$. This, as a result, gives $\hat{\theta}^{\pi^{(\hat\rho)}} = \hat{\theta}^{\pi^{(\rho^*)}} + o_P(1)$ by the continuity of $\hat{\theta}^{\pi^{(\rho)}}$.

It follows from Proposition 1 in \cite{zrnic2024active} that we have 
$$
\sqrt{n}\left(\hat{\theta}^{\pi^{(\rho^*)}} -\theta^*\right) \xrightarrow{d} \mathcal{N}\left(0, \sigma_{\rho^*}^2\right),
\quad \sigma_{\rho^*}^2 = \Var (\hat{\theta}^{\pi^{(\rho^*)}}).$$
Since $\hat{\theta}^{\pi^{(\hat\rho)}} \xrightarrow{p} \hat{\theta}^{\pi^{(\rho^*)}}$, 
$$
\sqrt{n}\left(\hat{\theta}^{\pi^{(\hat{\rho})}} -\theta^*\right) \xrightarrow{d} \mathcal{N}\left(0, \sigma_{\rho^*}^2\right).
$$

By the definition of $\rho^*$, $\rho^* = \mathop{\arg \min}\limits_{\rho} \Var (\hat{\theta}^{\pi^{(\rho)}}) $, we have 
$$
\sigma_{\rho^*}^2 = \Var (\hat{\theta}^{\pi^{(\rho^*)}}) \leq \min\{\Var (\hat{\theta}^{\pi^{(0)}}),\Var (\hat{\theta}^{\pi^{(1)}})\} = \min\{\sigma_0^2, \sigma_1^2\}.
$$

This completes the proof.

\subsection{A sufficient condition for $\hat{\rho} = \rho^* + o_P(1)$}

\begin{Proposition}
    Suppose $\hat{e}^2(X) = e^2(X) + o_P(1)$, and $\rho^*$ is unique. Suppose $\hat{e}(X)$ is uniformly upper bounded by $M>0$. Suppose further that $\pi^{(\rho)}(X)$ is uniformly lower-bounded by $m > 0$, then we have $\hat{\rho} = \rho^* + o_P(1)$.
\end{Proposition}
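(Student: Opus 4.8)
The plan is to recognize this as a standard argmin-consistency statement from M-estimation theory and to follow the classical two-ingredient recipe: uniform convergence of the empirical objective to a population objective, together with well-separation of the population minimizer. First I would identify the two objectives explicitly. Using \eqref{eq:var_meanest} and the decomposition $\Var(\hat\theta^{\pi^{(\rho)}}) = \frac{1}{n}\EE[e^2(X)/\pi^{(\rho)}(X)] + C$ stated just before \eqref{optimization_objective}, the relevant population criterion is $M(\rho) := \EE[e^2(X)/\pi^{(\rho)}(X)]$, whose minimizer is exactly $\rho^*$ (the additive constant $C$ is $\rho$-independent). The empirical criterion is $\hat M_n(\rho) := \frac{1}{n}\sum_{i=1}^n \hat e^2(X_i)/\pi^{(\rho)}(X_i)$, minimized at $\hat\rho$ by \eqref{optimization_objective}. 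The goal thereby reduces to showing $\hat\rho \xrightarrow{p} \rho^*$.

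The key step is to establish uniform convergence $\sup_{\rho\in[0,1]} |\hat M_n(\rho) - M(\rho)| \xrightarrow{p} 0$. I would split this into a plug-in error term and a sampling term,
\[
\hat M_n(\rho) - M(\rho) = \frac{1}{n}\sum_{i=1}^n \frac{\hat e^2(X_i) - e^2(X_i)}{\pi^{(\rho)}(X_i)} + \left(\frac{1}{n}\sum_{i=1}^n \frac{e^2(X_i)}{\pi^{(\rho)}(X_i)} - \EE\left[\frac{e^2(X)}{\pi^{(\rho)}(X)}\right]\right).
\]
The first term is bounded in absolute value by $\frac{1}{m}\|\hat e^2 - e^2\|_\infty$ uniformly over $\rho$ (using $\pi^{(\rho)}\ge m$), which is $o_P(1)$ by the consistency assumption. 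The second term is controlled by a uniform law of large numbers: since $\hat e \le M$ together with $\hat e^2 \to e^2$ uniformly force $e^2 \le M^2$, the integrand $e^2(x)/\pi^{(\rho)}(x)$ is bounded by $M^2/m$ and continuous in $\rho$ over the compact set $[0,1]$, so the family $\{e^2(\cdot)/\pi^{(\rho)}(\cdot):\rho\in[0,1]\}$ is Glivenko--Cantelli and this term vanishes uniformly in probability.

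Next I would verify that $\rho^*$ is a \emph{well-separated} minimizer, i.e., $\inf_{|\rho-\rho^*|\ge\delta} M(\rho) > M(\rho^*)$ for every $\delta>0$. This follows from the uniqueness of $\rho^*$, the continuity of $M$ (inherited from continuity of the path in Definition \ref{def: path} and boundedness of the integrand, via dominated convergence), and the compactness of $[0,1]$, through a standard subsequence argument. With uniform convergence and well-separation in hand, the classical argmin consistency theorem immediately yields $\hat\rho \xrightarrow{p}\rho^*$, which is the claim.

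I expect the main obstacle to be the uniform-in-$\rho$ law of large numbers for the sampling term: the pointwise law is immediate, but upgrading to uniformity requires the regularity of the path $\rho\mapsto\pi^{(\rho)}$. The continuity guaranteed by Definition \ref{def: path}, combined with the two boundedness assumptions ($\hat e \le M$ and $\pi^{(\rho)}\ge m$, which confine the integrand to a bounded, equicontinuous family), is precisely what licenses the Glivenko--Cantelli step. A secondary point worth flagging is the interpretation of $\hat e^2(X)=e^2(X)+o_P(1)$ as the sup-norm statement $\|\hat e^2 - e^2\|_\infty \xrightarrow{p}0$ used earlier in the paper, since it is this uniform version that the plug-in term requires.
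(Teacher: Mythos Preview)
Your proposal is correct and follows the same high-level recipe as the paper (uniform convergence plus well-separation plus argmin consistency), but you organize the argument differently from the paper, and your ordering is arguably cleaner. The paper first establishes Glivenko--Cantelli for the class $\{\hat e^2(\cdot)/\pi^{(\rho)}(\cdot):\rho\in[0,1]\}$ \emph{with the estimated error} via a bracketing-number bound, obtaining uniform convergence of $\hat M_n(\rho)$ to the \emph{random} population criterion $\EE[\hat e^2(X)/\pi^{(\rho)}(X)]$; it then argues that any minimizer of this random criterion is close to $\rho^*$ through a chain of inequalities that push the $o_P(1)$ discrepancy between $\hat e^2$ and $e^2$ through the expectation. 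You instead split off the plug-in error $\tfrac{1}{n}\sum_i(\hat e^2(X_i)-e^2(X_i))/\pi^{(\rho)}(X_i)$ at the empirical level (bounded by $m^{-1}\|\hat e^2-e^2\|_\infty$), and apply Glivenko--Cantelli to the class $\{e^2(\cdot)/\pi^{(\rho)}(\cdot)\}$ built from the \emph{true} error. The payoff of your route is that the limiting objective $M(\rho)=\EE[e^2(X)/\pi^{(\rho)}(X)]$ is deterministic, so well-separation follows immediately from uniqueness, continuity, and compactness, and the standard argmin theorem applies without the intermediate random minimizer set $S$ used in the paper. The paper's route, on the other hand, avoids needing the deterministic bound $e^2\le M^2$ that you deduce from $\hat e\le M$ together with $\|\hat e^2-e^2\|_\infty\to 0$.
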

\begin{proof}
    Denote 
    \[
    \mathcal{F} = \left\{f_{\rho}(x) = \frac{\hat{e}^2(x)}{\pi^{(\rho)}(x)}: \rho \in [0,1]\right\}.
    \]
    
    We first show that $\mathcal{F}$ is a P-Glivenko-Cantelli class.

    Since $\pi^{(\rho)}$ is continuous, and supported on $[0,1]$, it is  uniformly continuous on $[0,1]$. Hence for any $\delta > 0$, there exists $\eta>0$ such that $|\pi^{(\rho_1)}(X) - \pi^{(\rho_2)}(X)|\leq \frac{m^2}{M^2}\delta$ whenever $|\rho_1 - \rho_2| \leq \eta$. Now, we cover $[0,1]$ with a grid $0=\rho_0 < \rho_1 < \dots < \rho_K = 1$, where $\rho_k - \rho_{k-1} = \eta$ for $k\leq K-1$. Then, for any $\rho \in [\rho_{k-1}, \rho_{k}]$, we have 
    \[
    |f_{\rho}(x) - f_{\rho_{k-1}}(x)| = \left|\frac{\hat{e}^2(x)}{\pi^{(\rho)}(x)} - \frac{\hat{e}^2(x)}{\pi^{(\rho_{k-1})}(x)}\right| \leq \frac{M^2}{m^2}\left|\pi^{(\rho)}(x) - \pi^{(\rho_{k-1})}(x)\right| \leq \delta.
    \]
    Hence $[f_{\rho_{k-1}} - \delta, f_{\rho_{k-1}} + \delta]$ is an $2\delta$-bracket in $L_1(P)$ that contains every $f_{\rho}$ with $\rho \in [\rho_{k-1}, \rho_k]$. So the bracketing number $N_{[]}$ is finite, $N_{[]}(2\delta, \mathcal{F}, L_1(P)) \leq K \leq \frac{1}{\eta} + 1  < \infty.$ We thus conclude from the Blum-DeHardt theorem that $\mathcal{F}$ is a P-Glivenko-Cantelli class. Consequently, we have 
    \[
    \sup_{\rho \in [0,1]} \left|\frac{1}{n}\sum_{i=1}^{n}\frac{\hat{e}^2(X_i)}{\pi^{(\rho)}(X_i)} - \EE \frac{\hat{e}^2(X)}{\pi^{(\rho)}(X)}\right| \xrightarrow{p} 0.
    \]
This implies that 
\[
\left|\inf_{\rho \in[0,1] }\frac{1}{n}\sum_{i=1}^{n}\frac{\hat{e}^2(X_i)}{\pi^{(\rho)}(X_i)} - \inf_{\rho \in[0,1] }\EE \frac{\hat{e}^2(X)}{\pi^{(\rho)}(X)}\right| \xrightarrow{p} 0.
\]

By definition, $\hat{\rho} = \mathop{\arg\min}\limits_{\rho} \frac{1}{n}\sum_{i=1}^{n}\frac{\hat{e}^2(X_i)}{\pi^{(\rho)}(X_i)}$. Denote $S = \mathop{\arg\min}\limits_{\rho} \EE \frac{\hat{e}^2(X)}{\pi^{(\rho)}(X)}$. Then, by continuity of $\pi^{(\rho)}(X)$, we have $d(\hat{\rho}, S) \xrightarrow{p} 0$, for $d(\hat{\rho}, S) = \inf\{|\hat{\rho} - \hat{\rho}^*|: \hat{\rho}^* \in S\}$.

Now, for any $\hat{\rho}^* \in S$, we have 
\[
\begin{aligned}
    \EE \frac{e^2(X)}{\pi^{(\hat{\rho}^*)}(X)} \leq~&\EE\frac{\hat{e}^2(X) +o_P(1)}{\pi^{(\hat{\rho}^*)}(X)}\\
    \leq~& \EE\frac{\hat{e}^2(X)}{\pi^{(\rho^*)}(X)} +  o_P(1)\EE\frac{1}{\pi^{(\hat{\rho}^*)}(X)}\\
    \leq~& \EE\frac{e^2(X) +  o_P(1)}{\pi^{(\rho^*)}(X)} + o_P(1)\EE\frac{1}{\pi^{(\hat{\rho}^*)}(X)}\\
    =~& \EE\frac{e^2(X)}{\pi^{(\rho^*)}(X)} + o_P(1)\EE\left[\frac{1}{\pi^{(\rho^*)}(X)} + \frac{1}{\pi^{(\hat{\rho}^*)}(X)}\right]\\
    =~& \EE\frac{e^2(X)}{\pi^{(\rho^*)}(X)} + o_P(1).
\end{aligned}
\]
Since
\[
\operatorname{Var} (\hat{\theta}^{\pi^{(\rho)}}) = \mathbb{E} \left(\frac{e^2(X)}{\pi^{(\rho)}(X)}\right) + C,
\]
where \( C \) is a constant independent of \( \rho \), we have 
\[\operatorname{Var} (\hat{\theta}^{\pi^{(\hat{\rho}^*)}}) \leq \operatorname{Var} (\hat{\theta}^{\pi^{(\rho^*)}}) + o_P(1).
\]
On the other hand, by the definition of $\rho^*$,  
\[
\operatorname{Var} (\hat{\theta}^{\pi^{(\hat{\rho}^*)}}) \geq \operatorname{Var} (\hat{\theta}^{\pi^{(\rho^*)}}) 
\]
also holds. Whence $\operatorname{Var} (\hat{\theta}^{\pi^{(\hat{\rho}^*)}}) \xrightarrow{p} \operatorname{Var} (\hat{\theta}^{\pi^{(\rho^*)}})$.

Since \( \rho^* \) is the unique minimizer of \( \operatorname{Var} (\hat{\theta}^{\pi^{(\rho)}}) \), $
\hat{\rho}^* \xrightarrow{p} \rho^*$ 
by continuity. 
Since $d(\hat{\rho}, S) \xrightarrow{p} 0$ and $\hat{\rho}^*$ is an arbitrary element in $S$, we immediately conclude that 

\[
\hat{\rho} \xrightarrow{p} \rho^*.
\]

\end{proof}

\subsection{Proof of Theorem \ref{thm:path_m}}
By the definition of $\hat{\theta}^{\pi^{(\rho)}}$, we have 
$$
\hat{\theta}^{\pi^{(\rho)}}=\underset{\theta}{\arg \min }\frac{1}{n} \sum_{i=1}^n\left(\ell_{\theta, i}^f+\left(\ell_{\theta, i}-\ell_{\theta, i}^f\right) \frac{\xi_i}{\pi^{(\rho)}(X_i)}\right).
$$
We assume $\hat{\rho} = \rho^* + o_P(1)$. 
By the continuity of the budget-preserving path $\pi^{(\rho)}$, it follows that $\pi^{(\hat{\rho})}(X_i) = \pi^{(\rho^*)}(X_i) + o_P(1)$ for any $i\in\{1, \ldots, n\}$. This, as a result, gives $\hat{\theta}^{\pi^{(\hat\rho)}} = \hat{\theta}^{\pi^{(\rho^*)}} + o_P(1)$ by the continuity of $\ell_{\theta, i}^f+\left(\ell_{\theta, i}-\ell_{\theta, i}^f\right) \frac{\xi_i}{\pi^{(\rho)}(X_i)}$ with respect to $\theta$.

Given the assumption that $\hat{\theta}^{\pi^{(\rho^*)}} \xrightarrow{p} \theta^*$, from Theorem 1 in \cite{zrnic2024active}, we have 
$$
\sqrt{n}\left(\hat{\theta}^{\pi^{(\rho^*)}} -\theta^*\right) \xrightarrow{d} \mathcal{N}\left(0, \Sigma_{\rho^*}\right),
$$

where $\Sigma_{\rho^*} = H_{\theta^*}^{-1} \operatorname{Var}\left(\nabla \ell_{\theta^*, i}^f+\left(\nabla \ell_{\theta^*, i}-\nabla \ell_{\theta^*, i}^f\right) \frac{\xi}{\pi^{(\rho^*)}(X_i)}\right) H_{\theta^*}^{-1}$.

Since $\hat{\theta}^{\pi^{(\hat\rho)}} \xrightarrow{p} \hat{\theta}^{\pi^{(\rho^*)}}$, 
$$
\sqrt{n}\left(\hat{\theta}^{\pi^{(\hat{\rho})}} -\theta^*\right) \xrightarrow{d} \mathcal{N}\left(0, \Sigma_{\rho^*}\right).
$$
The definition $\rho^* = \mathop{\arg \min}\limits_{\rho} \Sigma^{\pi^{(\rho)}}_{jj}$ yields  
\[
\begin{aligned}
    \Sigma_{\rho^*,jj} 
    = \Sigma^{\pi^{(\rho^*)}}_{jj} 
    \leq \min\{\Sigma^{\pi^{(0)}}_{jj},\Sigma^{\pi^{(1)}}_{jj}\} 
    = \min\{\Sigma_{0,jj}, \Sigma_{1,jj}\}.
\end{aligned}
\]
This completes the proof.

\subsection{Setting $\pi \propto \hat{e}$ leads to a trivial choice of $\hat{\rho}=0$ when not incorporating robustness constraint}

Starting from the variance estimate used in the optimization objective 
\[\hat\rho = \argmin_\rho \frac{1}{n}\sum_{i=1}^{n}\frac{\hat{e}^2(X_i)}{\pi^{(\rho)}(X_i)},\] by the Cauchy-Schwarz inequality, for any $\pi$ such that $\sum_{i=1}^n \pi\left(X_i\right)=n_b$ (i.e. satisfying the budget constraint), $\sum_{i=1}^n \frac{\hat{e}^2\left(X_i\right)}{\pi\left(X_i\right)} \geq \frac{\left(\sum_{i=1}^n \hat{e}\left(X_i\right)\right)^2}{\sum_{i=1}^n \pi\left(X_i\right)}=\frac{\left(\sum_{i=1}^n \hat{e}\left(X_i\right)\right)^2}{n_b}$. The equality holds when $\pi \propto \hat{e}$, which corresponds to $\pi^{(\rho)}$ with $\rho=0$.

\section{A natural family of budget-preserving paths}\label{appendix:geodesic}

Among the diverse set of possible paths \citep{seguy2015principal, liero2016optimal}, it is natural to consider \emph{geodesic paths}, which are a family of ``shortest paths.''

\begin{Definition}[Geodesic \citep{burago2001course}] 
A curve $\gamma: I \rightarrow M$ from an interval $I\subseteq \R$ to a metric space $M$ with metric $d$ is a geodesic if there is a constant $v \geq 0$ such that for any $\rho \in I$ there is a neighborhood $J$ of $\rho$ in $I$ such that for any $\rho_1, \rho_2 \in J$ we have

$$
d\left(\gamma\left(\rho_1\right), \gamma\left(\rho_2\right)\right)=v\left|\rho_1-\rho_2\right|.
$$

\end{Definition}

We revisit the examples from Section \ref{sec:warmup} and provide more geodesic paths.

In all the following examples, we assume $P$ and $Q$ have the same support.

\begin{example}[Linear path]
    The linear path, $\pi^{(\rho)} \propto (1-\rho)\pi + \rho \pi^{\mathrm{unif}}$, is the geodesic path with respect to $d(P, Q) = \|P-Q\|$ with $v = \|\pi - \pi^{\mathrm{unif}}\|$. Here, $\|\cdot\|$ is any norm.
\end{example}

\begin{example}[Geometric path] The geometric path, $\pi^{(\rho)} \propto \pi^{1-\rho}(\pi^{\mathrm{unif}})^{\rho}$, is the geodesic path with respect to $d(P, Q) = \|\log P - \log Q \|$ with $v = \|\log \pi - \log \pi^{\mathrm{unif}}\|$. Here, $\log$ is taken element-wise.
\end{example}

\begin{example}[Hellinger path] The Hellinger path, $\pi^{(\rho)} \propto \left((1-\rho)\sqrt{\pi} + \rho \sqrt{\pi^{\mathrm{unif}}}\right)^2$, is the geodesic path with respect to $d(P, Q) = \|\sqrt{P} - \sqrt{Q} \|$ with $v = \|\sqrt{\pi} - \sqrt{\pi^{\mathrm{unif}}}\|$. Here, the square root is taken element-wise.
\end{example}

\textbf{Note (more examples).} Some distance metrics may not have an analytical characterization for their corresponding geodesic path, such as the Wasserstein and Jensen-Shannon distances. However, it is computationally tractable to solve for a geodesic path numerically up to a tolerance margin for many well-defined distance metrics. For example, when computing the geodesic for the Jensen-Shannon distance, we can discretize the interval \([0,1]\) into \(N\) segments so that \(P_0 = P\) and \(P_N = Q\), and we define a series of intermediate distributions \(P_1, P_2, \dots, P_{N-1}\). The task is then cast as an optimization problem: we minimize the total path length computed as the sum of the square roots of the Jensen-Shannon divergences between successive distributions, i.e., \(\sum_{i=0}^{N-1} \sqrt{\mathrm{JS}(P_i, P_{i+1})}\). Here, $\mathrm{JS}(P\|Q)=\frac{1}{2}D(P\|M) + \frac{1}{2}D(Q\|M)$, where $M=\frac{1}{2}(P+Q)$. This is a constrained optimization problem and can be solved by standard gradient-based methods.

\subsection{Uniqueness of $\rho^*$}

In Section \ref{sec:warmup}, we saw that the uniqueness of the optimal $\rho^*$ and the consistency of $\hat{e}$ are sufficient conditions for the consistency of $\hat{\rho}$. 
In the case of all three budget-preseving paths from the previous section, it can be easily verified by computing the second derivative of $\Var(\hat\theta^{\pi^{(\rho)}})$ that this variance is strictly convex and thus $\rho^*$ is unique. We include the corresponding proofs for completeness.

\paragraph{Linear path.}
We have $\pi^{(\rho)}(X) = (1-\rho) \pi(X) + \rho \frac{n_b}{n}$. The problem of minimizing $\Var(\hat\theta^{\pi^{(\rho)}})$ is equivalent to
\[ \mathop{\arg\min}\limits_{\rho} \EE \left[\frac{(Y-f(X))^2}{(1-\rho)\pi(X) + \rho \frac{n_b}{n}}\right].\]

Denoting $g(\rho) = \EE \left[\frac{(Y-f(X))^2}{(1-\rho)\pi(X) + \rho \frac{n_b}{n}}\right]$, we have 
\[g^{\prime}(\rho) = \EE \left[\frac{-\left(Y-f(X)\right)^2\left(\frac{n_b}{n} - \pi(X)\right)}{\left((1-\rho)\pi(X) + \rho \frac{n_b}{n}\right)^2}\right],\]
and 
\[g^{\prime \prime}(\rho) = \EE \left[\frac{2\left(Y-f(X)\right)^2\left(\frac{n_b}{n} - \pi(X)\right)^2}{\left((1-\rho)\pi(X) + \rho \frac{n_b}{n}\right)^3}\right].\]

Clearly, $g^{\prime \prime}(\rho) > 0$, which means that $g(\rho)$ is convex. Hence, there is a unique optimal value of $\rho$ in $[0,1]$. 

Notice that $g^{\prime}(1) = \frac{n^2}{n_b^2} \EE \left[ (Y-f(X))^2\left(\pi(X) - \frac{n_b}{n}\right)\right]$. Hence, if $\EE \left[ (Y-f(X))^2\pi(X)\right] > \frac{n_b}{n}\EE\left[ (Y-f(X))^2\right]$, then $g^{\prime}(1) > 0$, which implies that the optimal $\rho$ lies in $[0,1)$.

\paragraph{Geometric path.}
Consider the path $\pi^{(\rho)}(X) \propto \pi(X)^{1-\rho}(\pi^{\mathrm{unif}})^{\rho}$; in particular, $\pi^{(\rho)}(X) = \frac{n_b}{n} \frac{\pi(X)^{1-\rho}}{\EE[\pi(X)^{1-\rho}]}$. 

Similar to the last example, we denote $g(\rho) = \EE\left[\frac{(Y-f(X))^2}{\pi^{(\rho)}(X)}\right] = \frac{n}{n_b}\EE\left[\frac{(Y-f(X))^2}{\pi(X)^{1-\rho}}\right]\EE\left[\pi(X)^{1-\rho}\right]$. Then, we have

$$
    g'(\rho) = \frac{n}{n_b} \EE\left[\frac{(Y-f(X))^2}{\pi(X)^{1-\rho}}\log \pi(X)\right]\EE\left[\pi(X)^{1-\rho}\right] -\frac{n}{n_b}\EE\left[\frac{(Y-f(X))^2}{\pi(X)^{1-\rho}}\right]\EE\left[\pi(X)^{1-\rho} \log \pi(X)\right],
$$

and 

$$
\begin{aligned}
    g''(\rho) = &\frac{n}{n_b} \EE\left[\frac{(Y-f(X))^2}{\pi(X)^{1-\rho}}\log ^2\pi(X)\right]\EE\left[\pi(X)^{1-\rho}\right] +\frac{n}{n_b}\EE\left[\frac{(Y-f(X))^2}{\pi(X)^{1-\rho}}\right]\EE\left[\pi(X)^{1-\rho} \log^2 \pi(X)\right]\\
    &- 2 \frac{n}{n_b} \EE\left[\frac{(Y-f(X))^2}{\pi(X)^{1-\rho}} \log \pi(X)\right]\EE\left[\pi(X)^{1-\rho} \log \pi(X)\right].
\end{aligned}
$$

Since $(Y-f(X))^2 \geq 0$, $\pi(X) > 0$, and $\log^2 \pi(X) \geq 0$, we have that 

$$
\begin{aligned}
    &\EE\left[\frac{(Y-f(X))^2}{\pi(X)^{1-\rho}}\log ^2\pi(X)\right]\EE\left[\pi(X)^{1-\rho}\right] +\EE\left[\frac{(Y-f(X))^2}{\pi(X)^{1-\rho}}\right]\EE\left[\pi(X)^{1-\rho} \log^2 \pi(X)\right]\\
    \geq & 2\sqrt{\EE\left[\frac{(Y-f(X))^2}{\pi(X)^{1-\rho}}\log ^2\pi(X)\right]\EE\left[\pi(X)^{1-\rho}\right]\EE\left[\frac{(Y-f(X))^2}{\pi(X)^{1-\rho}}\right]\EE\left[\pi(X)^{1-\rho} \log^2 \pi(X)\right]}\\
    = & 2\sqrt{\EE\left[\frac{(Y-f(X))^2}{\pi(X)^{1-\rho}}\log ^2\pi(X)\right]\EE\left[\frac{(Y-f(X))^2}{\pi(X)^{1-\rho}}\right]\EE\left[\pi(X)^{1-\rho}\right]\EE\left[\pi(X)^{1-\rho} \log^2 \pi(X)\right]}\\
    \geq & 2\sqrt{ \EE^2\left[\frac{(Y-f(X))^2}{\pi(X)^{1-\rho}} \log \pi(X)\right]\EE^2\left[\pi(X)^{1-\rho} \log \pi(X)\right]}\\
    = & \EE\left[\frac{(Y-f(X))^2}{\pi(X)^{1-\rho}} \log \pi(X)\right]\EE\left[\pi(X)^{1-\rho} \log \pi(X)\right].
\end{aligned}
$$

The last inequality follows from the Cauchy-Schwarz inequality. Therefore, we have $g''(\rho) \geq 0$. Further, if $\pi(X) \neq \pi^{\mathrm{unif}}$, the inequality is strict, which means $g(\rho)$ is convex. Thus, there is a unique optimal value of $\rho$ in $[0,1]$.

\paragraph{Hellinger path.}
Suppose $P$ and $Q$ are two discrete distributions. The Hellinger distance between $P$ and $Q$ is $H(P, Q) = \frac{1}{\sqrt{2}} \|\sqrt{P} - \sqrt{Q} \|_2$. The geodesic connecting $\pi(X)$ and $\pi^{\mathrm{unif}} = \frac{n_b}{n}$ is:

$$
\pi^{(\rho)}(X) = \left(\frac{\sin ((1-\rho)\beta)}{\sin \beta}\sqrt{\pi(X)} +  \frac{\sin (\rho\beta)}{\sin \beta}\sqrt{\frac{n_b}{n}}\right)^2,
$$

where $\beta = \arccos \left(\sum_{i=1}^{n} \sqrt{\frac{\pi(X_i)}{n}\cdot n_b}\right)$.

Similarly as above, minimizing the variance $\Var(\hat\theta^{\pi^{(\rho)}})$ amounts to minimizing the function $$g(\rho) = \EE \left[\frac{(Y-f(X))^2}{\left(\frac{\sin ((1-\rho)\beta)}{\sin \beta}\sqrt{\pi(X)} +  \frac{\sin (\rho\beta)}{\sin \beta}\sqrt{\frac{n_b}{n}}\right)^2}\right]$$ over $\rho$. The derivative $g^{\prime}(\rho)$ is given by 
\[
-2 \EE \left[(Y-f(X))^2\left(\frac{\sin ((1-\rho)\beta)}{\sin \beta}\sqrt{\pi(X)} +  \frac{\sin (\rho\beta)}{\sin \beta}\sqrt{\frac{n_b}{n}}\right)^{-3}\left(-\beta \frac{\cos ((1-\rho)\beta)}{\sin \beta}\sqrt{\pi(X)} +  \beta\frac{\cos (\rho\beta)}{\sin \beta}\sqrt{\frac{n_b}{n}}\right)\right],
\] 
while the second $g^{\prime\prime}(\rho)$ is given by 
\begin{align*}
\EE &\left[(Y-f(X))^2\left(\frac{\sin ((1-\rho)\beta)}{\sin \beta}\sqrt{\pi(X)} +  \frac{\sin (\rho\beta)}{\sin \beta}\sqrt{\frac{n_b}{n}}\right)^{-4}  
\left[\vphantom{\sum_{i=1}^m} 6\left(-\beta \frac{\cos ((1-\rho)\beta)}{\sin \beta}\sqrt{\pi(X)} \right.\right.\right.\\
& \left.\left.\left. + \beta\frac{\cos (\rho\beta)}{\sin \beta}\sqrt{\frac{n_b}{n}}\right)^2 + 2\beta^2 \left(\frac{\sin ((1-\rho)\beta)}{\sin \beta}\sqrt{\pi(X)} +  \frac{\sin (\rho\beta)}{\sin \beta}\sqrt{\frac{n_b}{n}}\right)^{2}\right]\vphantom{\sum_{i=1}^m}\right] > 0.
\end{align*}
Therefore, $g(\rho)$ is strictly convex, and there is a unique optimal value of $\rho$ in $[0,1]$.

\section{Perturbed model errors after robust optimization}\label{appendix:constraint}

It is natural to choose the constraint $\mathcal C$ by upper-bounding the norm of $\boldsymbol{\epsilon}$. Our default choice is the $\ell_2$ norm, i.e.~$\|\boldsymbol{\epsilon}\|_2 \leq c$. The $\ell_2$ norm can be roughly thought of as controlling the variance of the errors in $\hat e^2$. In particular, imagine $\hat e^2(X_i)$ can be viewed as a noisy version of $e^2(X_i)$: $\hat{e}^2(X_i) = e^2(X_i) + \xi_i$, where the $(X_i,\xi_i)$ pairs are i.i.d. and $\xi_i$ have mean zero. Then, by concentration, $\|\boldsymbol{\epsilon}\|_2^2 \approx \sum_{i} \Var(\xi_i)$.

In Figure \ref{fig:err_perturb} we illustrate how robust optimization over the $\ell_2$ set $\mathcal C$ recovers errors $\hat e^2(X_i) + \epsilon_i$ that are much closer to $e^2(X_i)$ than simply using $\hat e^2(X_i)$.

\begin{figure}[!htbp]
    \centering
    \includegraphics[width=0.75\linewidth]{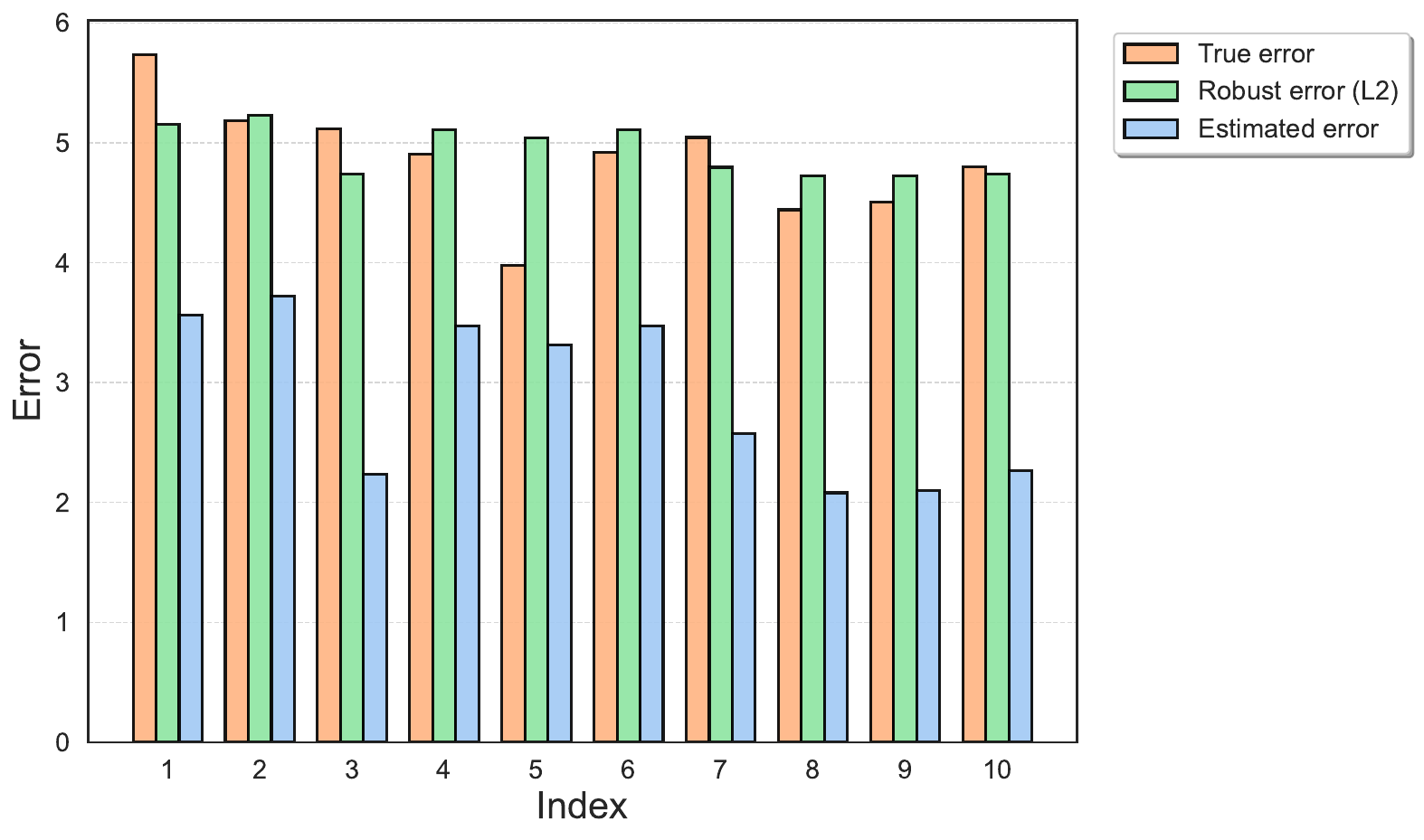}
    \caption{\textbf{Perturbed errors $\hat e^2(X_i) + \epsilon_i$ vs naive errors $\hat e^2(X_i)$ with $\ell_2$ constraint $\mathcal C$.} We consider a regime where we underestimate the true error (for example, due to the model being overconfident). We let $e(X_i) \sim \mathcal{N}(5, 0.25)$ and $\hat{e}(X_i) \sim \mathcal{N}(3, 0.25)$, and $\pi^{(\rho)}$ is the linear path with $\rho = 0.5$. The robustness constraint is $\mathcal{C} = \{\boldsymbol{\epsilon}: \|\boldsymbol{\epsilon}\|_2 \leq 50\}$. Each index $i$ corresponds to one sample $X_i$. The robust error (green bar) is the error after perturbation, $\hat e^2(X_i) + \epsilon_i$, and the estimated error (blue bar) is the error before perturbation, $\hat e^2(X_i)$. The robust errors are much closer to the estimated errors.}
    \label{fig:err_perturb}
\end{figure}

\section{A toy example: choice of $\mathcal{C}$}
A simple $ \ell_2 $ norm constraint may not always be the most powerful choice of $ C $. Zooming out, our method can in principle be combined with \emph{any} choice of $ C $, including one where we learn regions of the space where scores are systematically overconfident or underconfident. At a high level, our method (1) learns $ C $ (in our experiment, the ``learning'' is a simple fitting of $ c $ through cross-validation), and (2) solves a robust optimization problem with $ C $ in place. Your suggestion is an interesting choice of step (1).

We developed a dataset featuring a central ``hard'' region ($ |X| \le 2 $) flanked by two ``easy'' regions ($ 2 < |X| < 5 $). In the easy regions, error data was sampled from $ \mathcal{N}(2, 0.05) $. In the hard region, error was drawn from $ \mathcal{N}(1, 0.25) $. The estimator of error, $ \hat{\epsilon}(X) $, is designed to underestimate the error in the hard region and overestimate the error in the easy region. Specifically, $ \hat{\epsilon}(X) = 0.5 $ for $ |X| \le 2 $, and $ \hat{\epsilon}(X) = 2.5 $ otherwise.

Subsequently, we trained a meta-classifier, a gradient boost classifier, $ h(X) $, to identify these regions solely based on the performance of $ \hat{\epsilon}(X) $, without prior knowledge of the region boundaries.

This approach proved highly effective, with the meta-classifier achieving over 99\% accuracy in identifying the regions. This demonstrates our success in learning the error regions and enables us to separate the constraint set $ C $ based on these distinctions. For instance, $ C $ can be defined as $ \| \epsilon_{\text{easy}} \|_2 \le c_{\text{easy}} $ for the easy region ($ 2 < |X| < 5 $) and $ \| \epsilon_{\text{hard}} \|_2 \le c_{\text{hard}} $ for the hard region ($ |X| \le 2 $). Or even simpler, we can only optimize over hard regions, i.e. $ c_{\text{easy}} = 0 $. While these regions' dimensions are not fixed and depend on $ X $, this presents no practical difficulties because we have complete information about $ X $.

Next, we compared this structured constraint with the global constraint. Here, for the structured constraint, we only optimize over the hard region. The following table shows the result when $ n = 7000 $, $ n_h = 1400 $, and $ \pi \propto \hat{\epsilon} $.

\begin{table}[h!]
\centering
\begin{tabular}{lrr}
\hline
\textbf{Method} & \textbf{ESS} & \textbf{ESS Gain (\%)} \\
\hline
Uniform & 1400 & 0.00\% \\
Active & 1213 & -13.3\% \\
Robust active (global) & 1491 & 6.5\% \\
Robust active (structured) & 1495 & 6.8\% \\
\hline
\end{tabular}
\end{table}

We found that incorporating the structured constraint provided a slight gain in ESS over the global constraint while reducing the constraint size ($ c_{\text{global}} = 85 $ vs. $ c_{\text{hard}} = 75 $). This suggests that a more focused perturbation can be beneficial when we have strong knowledge of confident regions. However, we note that the global constraint remains a simple and practical approach given the limited gain.

\section{Additional experimental results}\label{appendix:more_exp}

\subsection{Plots with coverage and standard deviation}\label{appendix:plots}
In this subsection, we provide figures corresponding to the figures in the main text, where in addition to the effective sample size we also plot coverage.

\begin{figure}[!htbp]
    \centering
    \includegraphics[width=0.9\linewidth]{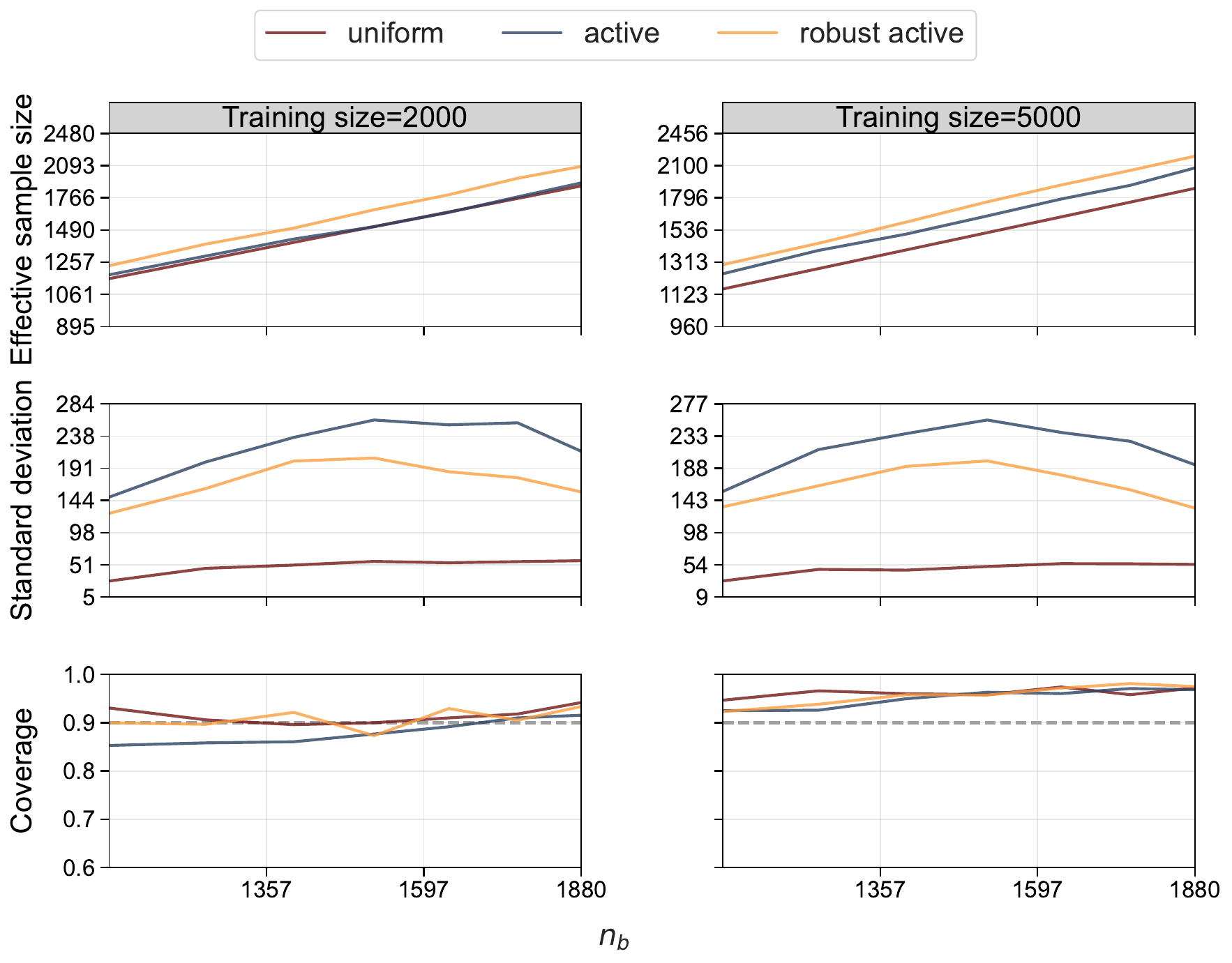}
    \caption{\textbf{Effective sample size and coverage on Pew post-election survey data}, for different dataset sizes used to train $f$. We compare uniform, active, and robust active sampling, for different values of the sampling budget $n_b$. The target of inference is the approval rate of a presidential candidate. We show the mean and one standard deviation of the effective sample size estimated over $500$ trials; in each trial we independently sample the observed labels.}
    \label{fig:election_without_robust_coverage}
\end{figure}

\begin{figure}[!htbp]
    \centering
    \includegraphics[width=0.9\linewidth]{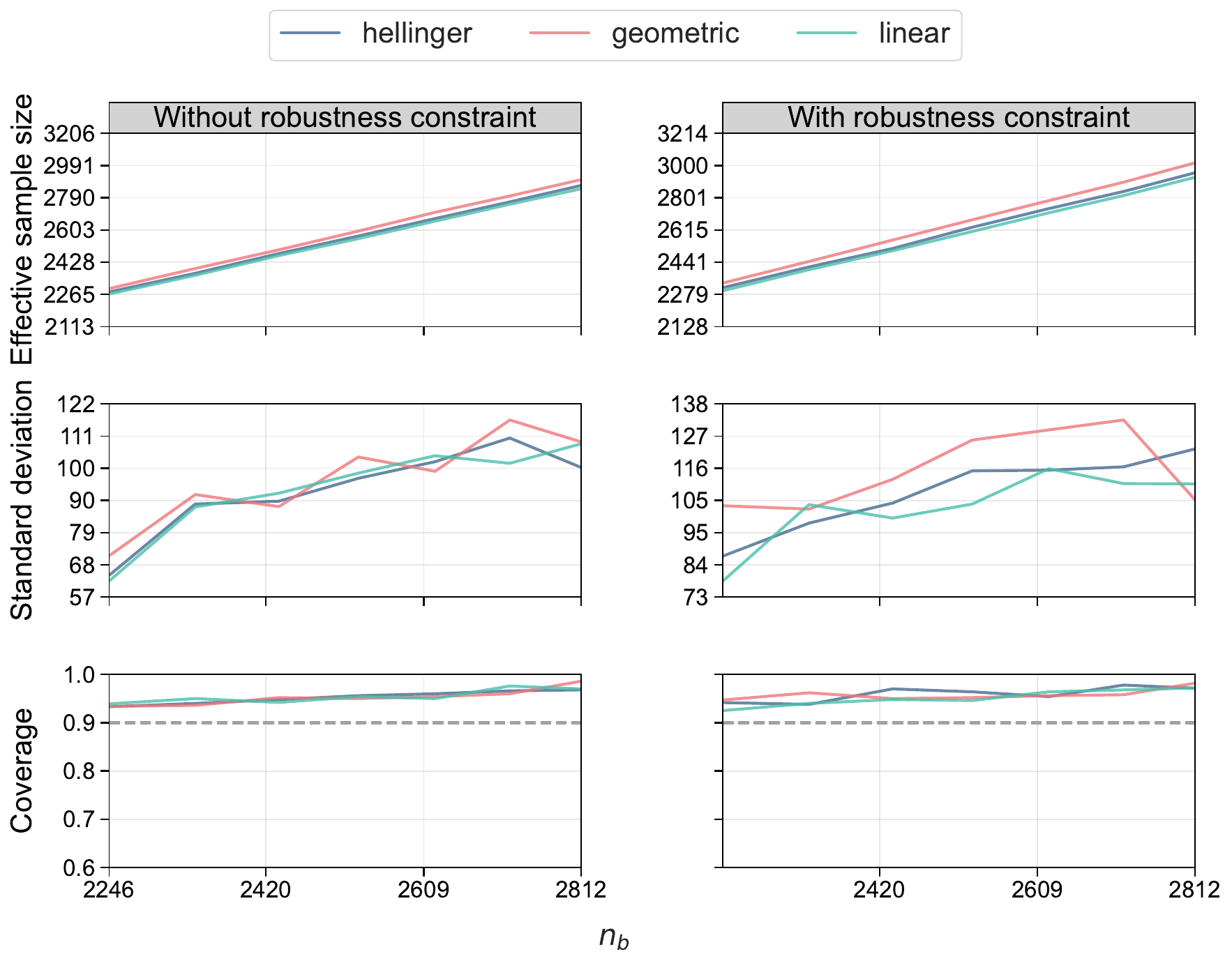}
    \caption{\textbf{Effective sample size and coverage for different budget-preserving paths on Pew post-election survey data}, without (left) and with (right) a robustness constraint $\mathcal C$. In both cases, the geometric path leads to the largest effective sample size. The target of inference is the same as in Figure \ref{fig:election_robust}. We show the mean and one standard deviation of the effective sample size estimated over $500$ trials; in each trial we independently sample the observed labels.}
    \label{fig:guide_ess_coverage}
\end{figure}

\begin{figure}[H]
    \centering
    \includegraphics[width=1\linewidth]{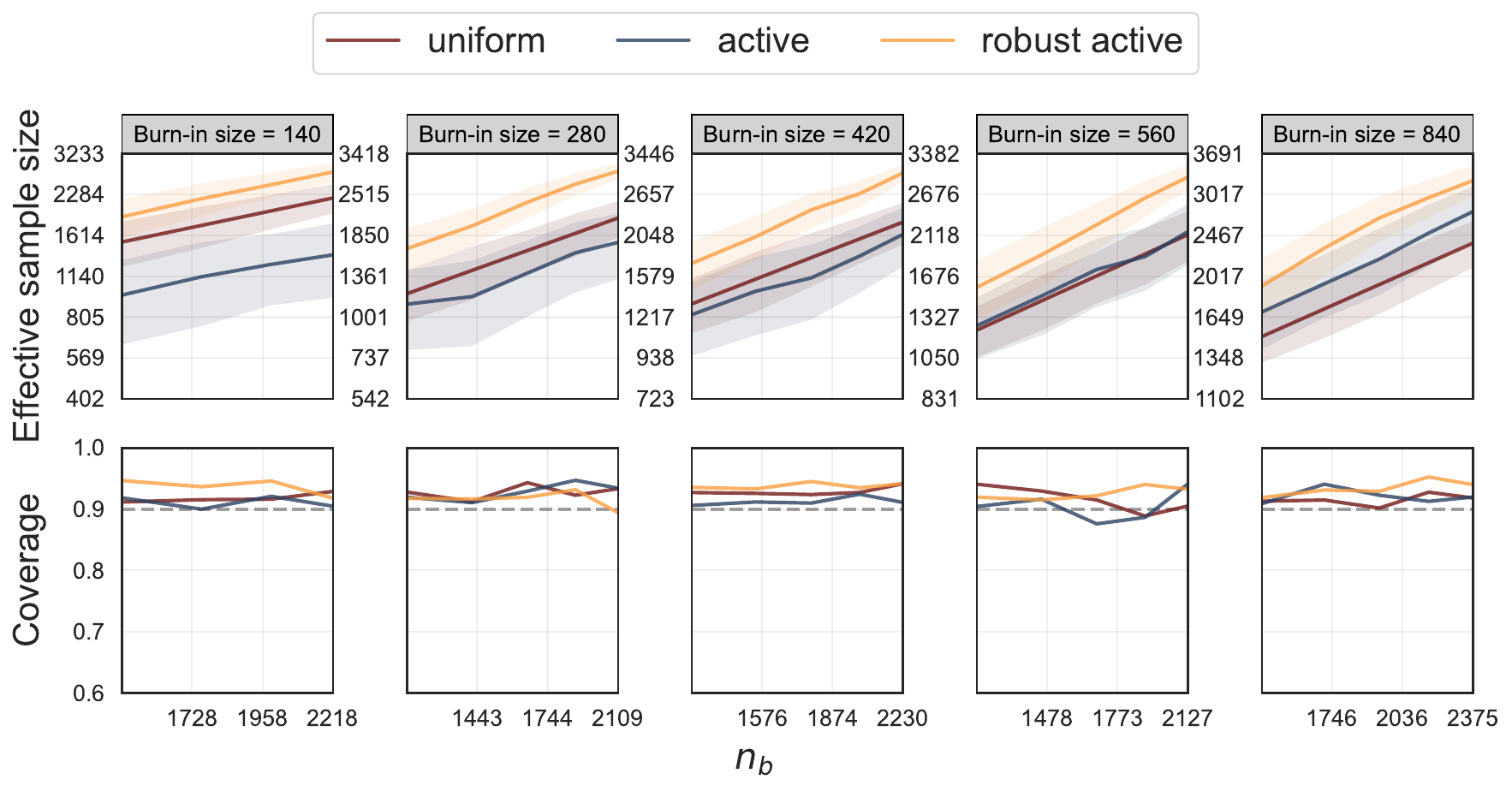}
    \caption{\textbf{Effective sample size and coverage on US Census data}, for varying burn-in dataset sizes. We compare uniform, active, and robust active sampling, for different values of the sampling budget $n_b$. The target of inference is the relationship between age and income, estimated via a linear regression. We show the mean and one standard deviation of the effective sample size estimated over $500$ trials; in each trial we independently sample the observed labels.}
    \label{fig:census_robust_coverage}
\end{figure}

\begin{figure}[H]
    \centering
    \includegraphics[width=0.9\linewidth]{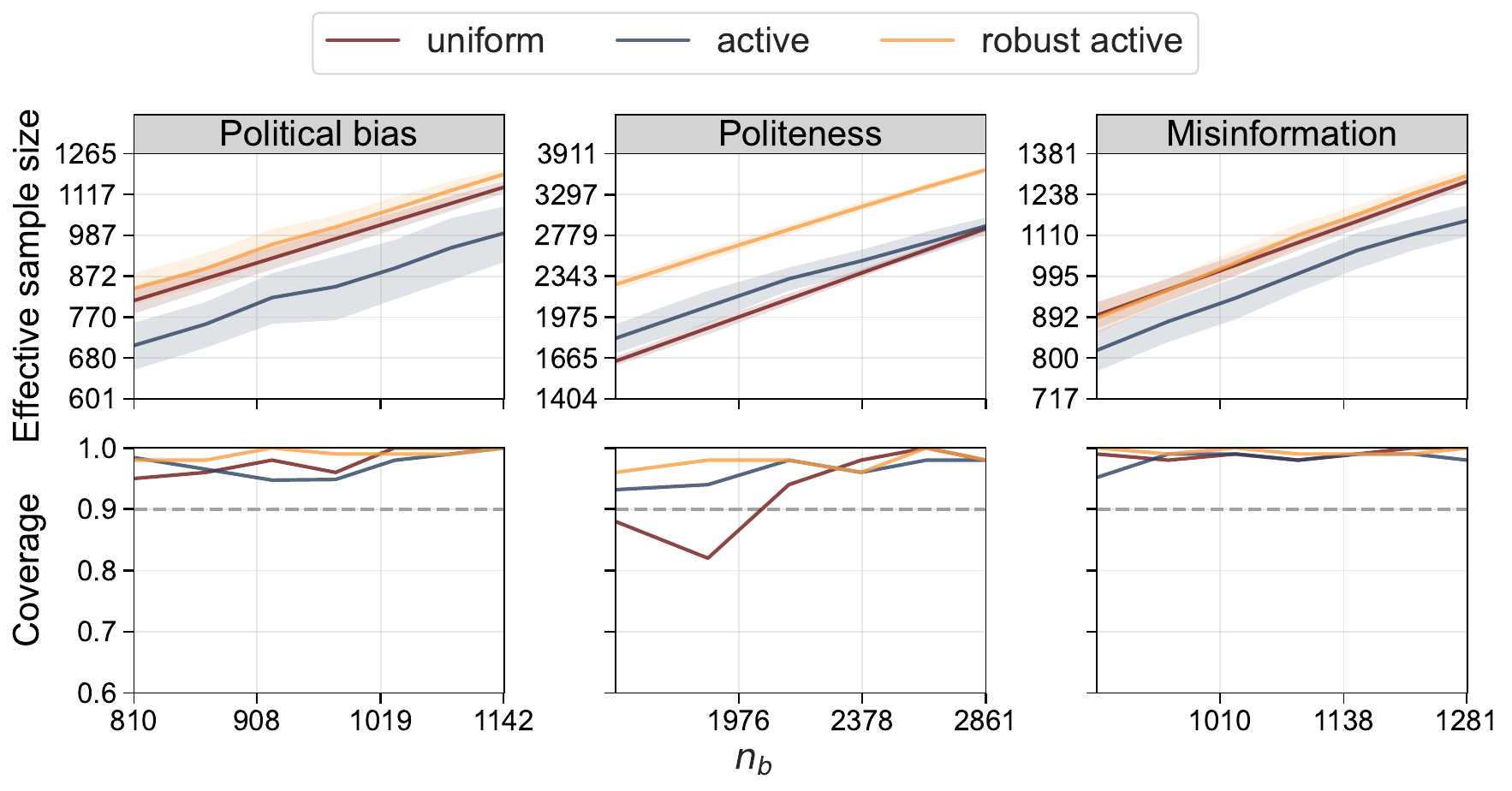}
    \caption{\textbf{Effective sample size and coverage on social science text annotation datasets}. We compare uniform, active, and robust active sampling, for different values of the sampling budget $n_b$. The targets of inference are (left to right) the prevalence of right-leaning political bias, the relationship between hedging and politeness, and the prevalence of misinformation. We show the mean and one standard deviation of the effective sample size estimated over $500$ trials; in each trial we independently sample the observed labels.}
    \label{fig:llm_ess_coverage}
\end{figure}

\begin{figure}[H]
    \centering
    \includegraphics[width=1\linewidth]{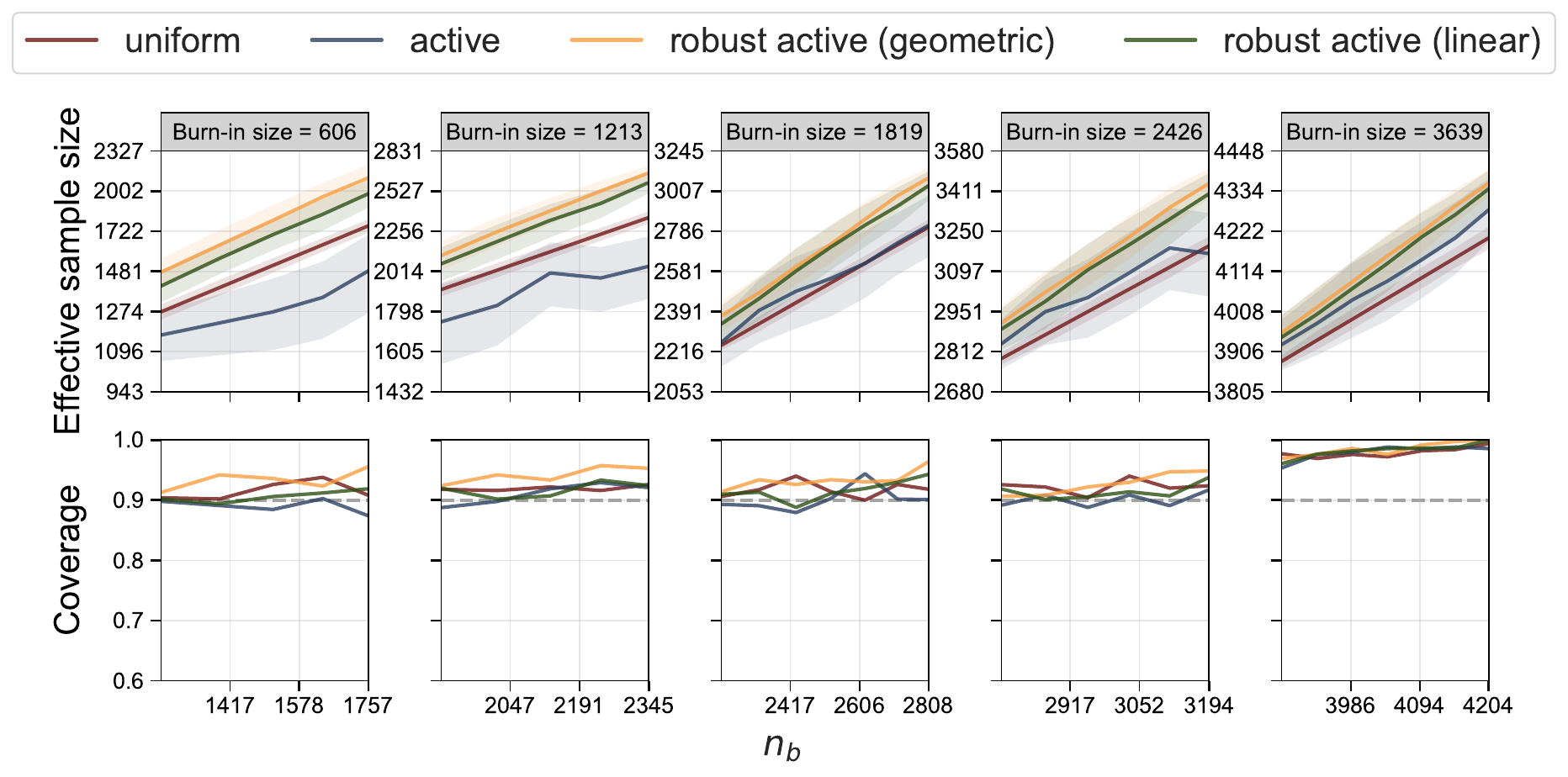}
    \caption{\textbf{Effective sample size (top) and coverage (bottom) on Pew post-election survey data}, for varying burn-in dataset sizes with respect to different proportions of the data. We compare uniform, active, and robust active sampling with geometric and linear paths, for different values of the sampling budget $n_b$. The target of inference is the approval rate of a presidential candidate. We show the mean and one standard deviation of the effective sample size estimated over $500$ trials; in each trial we independently sample the observed labels.}
    \label{fig:13}
\end{figure}

\subsection{Burn-in size v.s. robustness constraint $\mathcal{C}$}

In addition to optimized $\rho_{\mathrm{robust}}$ along the path, we also provided the optimized value $c$ in the robustness constraint $\mathcal C = \{\boldsymbol{\epsilon} : \|\boldsymbol{\epsilon}\|_2 \leq c\}$. As expected, we observe a more conservative constraint when the errors are poorly estimated.

\begin{figure}[H]
    \centering
    \includegraphics[width=0.5\linewidth]{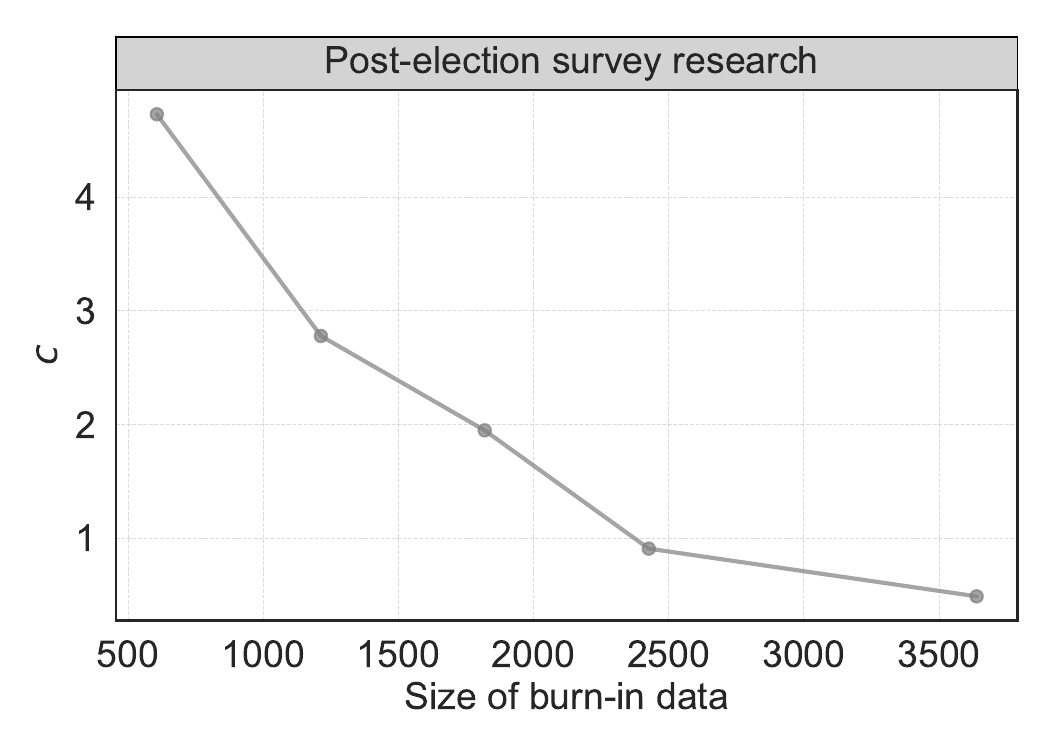}
    \caption{\textbf{Optimized value $c$ along the geometric path} as a function of the size of the burn-in data for the post-election survey data. }
    \label{fig:14}
\end{figure}

\subsection{Sensitivity to step size}
When we solve the optimization problem \eqref{eq:robust_rho}, we employ a grid search for $\rho$ in the outer loop. We conducted experiments to explore different step sizes of the grid search and confirmed the robustness of our results to step-size selection, as shown below. The gap in effective sample size between these two estimators is minimal, and both significantly outperform uniform and active baselines.
\begin{figure}[H]
    \centering
    \includegraphics[width=1.0\linewidth]{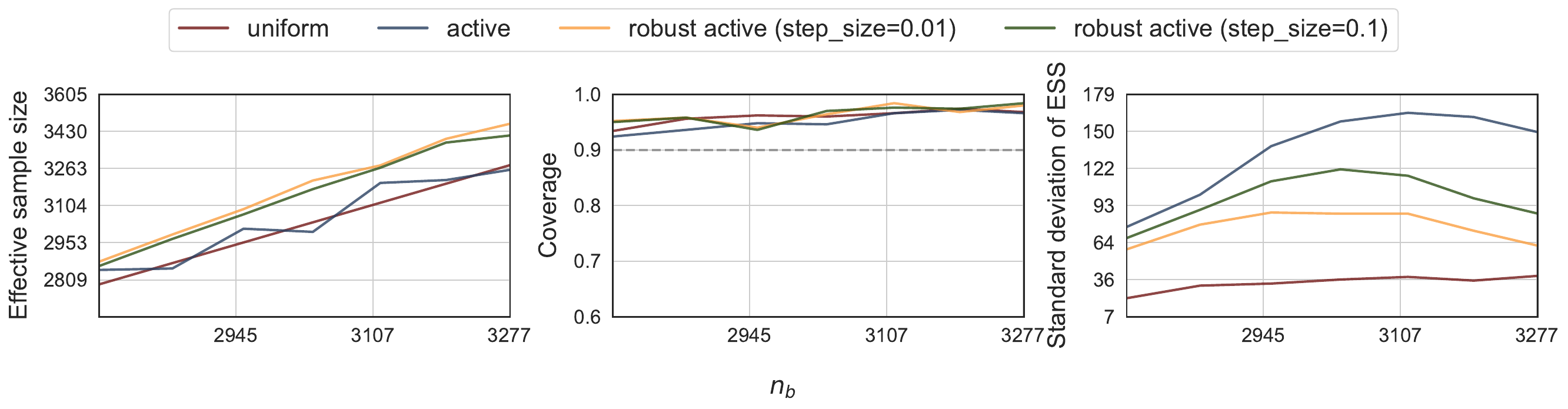}
    \caption{\textbf{Effective sample size on Pew post-election survey data}, for different step sizes in grid search for $\rho$. We compare uniform, active, and robust active sampling with grid search step sizes of 0.01 and 0.1. The target of inference is the approval rate of a presidential candidate. We show the mean and one standard deviation of the effective sample size estimated over $500$ trials; in each trial we independently sample the observed labels.}
    \label{fig:15}
\end{figure}

\end{document}